\documentclass{article} 
\usepackage{iclr2023_conference,times}


\usepackage{amsmath,amsfonts,bm}









\def\eqref#1{equation~\ref{#1}}









\def\1{\bm{1}}










\DeclareMathAlphabet{\mathsfit}{\encodingdefault}{\sfdefault}{m}{sl}
\SetMathAlphabet{\mathsfit}{bold}{\encodingdefault}{\sfdefault}{bx}{n}













\usepackage{hyperref}
\usepackage{url}
\usepackage{resizegather}
\usepackage{graphicx}
\usepackage{subfigure}
\usepackage{capt-of}
\usepackage{caption}
\usepackage{threeparttable}

\usepackage[ruled]{algorithm2e}

\usepackage{xcolor}

\usepackage{booktabs}       
\usepackage{wrapfig}       
\usepackage{diagbox}
\usepackage{multirow}
\usepackage{makecell}

\usepackage{bm}
\usepackage{dsfont}
\usepackage{amsmath}
\usepackage{amssymb}
\usepackage{amsthm}
\newtheorem{theorem}{Theorem}

\def \cE {\mathcal{E}}

\def \P {\mathbb{P}}

\def \bfE {\mathbb{E}}
 \def \cU {\mathcal{U}}
 \def \cI {\mathcal{I}}
\def \cL{\mathcal{L}}
\def \cO{\mathcal{O}}
 \def \cD{\mathcal{D}}

 \def \cV {\mathcal{V}}

\title{StableDR: Stabilized Doubly Robust Learning for Recommendation on Data Missing Not at Random}


\author{Haoxuan Li$^1$\qquad Chunyuan Zheng$^2$\qquad Peng Wu$^3$\thanks{Corresponding author.}\\
    $^1$Peking University \\
    $^2$University of California, San Diego \\
    $^3$Beijing Technology and Business University\\
     \texttt{hxli@stu.pku.edu.cn},\qquad \texttt{czheng@ucsd.edu},\qquad  \texttt{pengwu@btbu.edu.cn}
}

%

\iclrfinalcopy 
\begin{document}

\maketitle

\begin{abstract}
 In recommender systems, users always choose the favorite items to rate, which leads to data missing not at random and poses a great challenge for unbiased evaluation and learning of prediction models. Currently, the doubly robust (DR) methods have been widely studied and demonstrate superior performance. However, in this paper, we show that DR methods are unstable and have unbounded bias, variance, and generalization bounds to extremely small propensities. Moreover, the fact that DR relies more on extrapolation will lead to suboptimal performance. To address the above limitations while retaining double robustness, we propose a stabilized doubly robust (StableDR) learning approach with a weaker reliance on extrapolation. Theoretical analysis shows that StableDR has bounded bias, variance, and generalization error bound simultaneously under inaccurate imputed errors and arbitrarily small propensities. In addition, we propose a novel learning approach for StableDR that updates the imputation, propensity, and prediction models cyclically, achieving more stable and accurate predictions. Extensive experiments show that our approaches significantly outperform the existing methods.
\end{abstract}

\section{Introduction}

Modern recommender systems (RSs) are rapidly evolving with the adoption of sophisticated deep learning models~\citep{zhang2019deep}. However, 
it is well documented that directly using advanced deep models 
usually achieves sub-optimal performance 
due to the existence of various biases in RS~\citep{Chen-etal2020, Wu-etal2022}, and the biases would be amplified over time~\citep{Mansoury-etal2020, Wen-etal2022}. 
 A large number of debiasing methods have emerged and gradually become a trend. 
 For many practical tasks in RS, such as 
rating prediction~\citep{Schnabel-Swaminathan2016, wang-etal2020, Wang-Zhang-Sun-Qi2019}, post-view click-through rate  prediction~\citep{MRDR}, post-click conversion rate prediction~\citep{Zhang-etal2020, Dai-etal2022}, and uplift modeling~\citep{SDM-SaitoSN19, Sato-Singh2019, Sato-Takemori2020}, a critical challenge is to combat the selection bias and confounding bias that leading to significantly difference between the trained sample and the  targeted population~\citep{Hernan-Robins2020}. Various methods were designed to
  address this problem and among them,  
  doubly robust (DR) methods~\citep{Wang-Zhang-Sun-Qi2019, Zhang-etal2020, Chen-etal2021, Dai-etal2022, Ding-etal2022} play the dominant role due to their better performance and theoretical properties.

The success of DR is attributed to its double robustness and joint-learning technique. However, the DR methods still have many limitations. Theoretical analysis shows that {inverse probability scoring (IPS)} and DR methods may have infinite bias, variance, and generalization error bounds, in the presence of extremely small propensity scores~\citep{Schnabel-Swaminathan2016, Wang-Zhang-Sun-Qi2019, MRDR, TDR}. In addition, due to the fact that users are more inclined to evaluate the preferred items, the problem of data missing not at random (MNAR) often occurs in RS. This would cause selection bias and results in inaccuracy for methods that more rely on extrapolation, such as error imputation based (EIB)~\citep{marlin2012collaborative, steck2013evaluation} and DR methods.


To overcome the above limitations while maintaining double robustness, we propose a stabilized doubly robust (SDR) estimator with a weaker reliance on extrapolation, which reduces the negative impact of extrapolation and MNAR effect on the imputation model. Through theoretical analysis, we demonstrate that the SDR has bounded bias and generalization error bound for arbitrarily small propensities, which further indicates that the SDR can achieve more stable predictions.

Furthermore, we propose a novel cycle learning approach for SDR. Figure \ref{fig:sdr} shows the differences between the proposed cycle learning of SDR and  the existing unbiased learning approaches. Two-phase learning~\citep{marlin2012collaborative, steck2013evaluation, Schnabel-Swaminathan2016} first obtains an imputation/propensity model to estimate the ideal loss and then updates the prediction model by minimizing the estimated loss. DR-JL~\citep{Wang-Zhang-Sun-Qi2019}, MRDR-DL~\citep{MRDR}, and AutoDebias~\citep{Chen-etal2021} alternatively update the model used to estimate the ideal loss and the prediction model. The proposed learning method cyclically uses different losses to update the three models with the aim of achieving more stable and accurate prediction results. We have conducted extensive experiments on two real-world datasets, and the results show that the proposed approach significantly improves debiasing and convergence performance compared to the existing methods.

\begin{figure}[tbp]
    \centering
    \resizebox{1\linewidth}{!}{
    \includegraphics[scale=1]{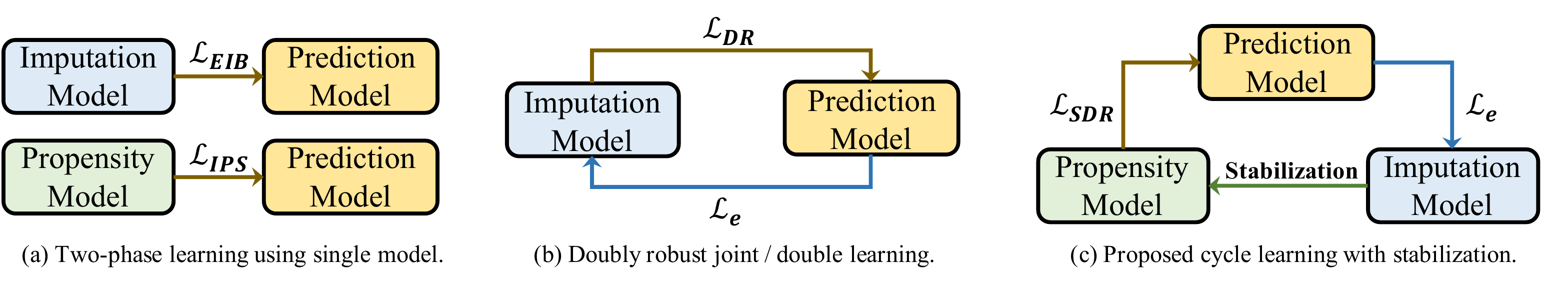}}
     \caption{During the training of updating a prediction model, two-phase learning~\citep{marlin2012collaborative, steck2013evaluation, Schnabel-Swaminathan2016} uses a fixed imputation/propensity model (Left), whereas DR-JL~\citep{Wang-Zhang-Sun-Qi2019}, MRDR-DL~\citep{MRDR}, and AutoDebias~\citep{Chen-etal2021} uses alternative learning between the imputation/propensity and the prediction model (Middle). The proposed learning approach updates the three models cyclically with stabilization (Right).}
    \label{fig:sdr} 
    \vskip -0.1in
\end{figure}

\section{Preliminaries} 
\subsection{Problem Setting}


In RS, due to the fact that users are more inclined to evaluate the preferred items, the collected ratings are always missing not at random (MNAR). We formulate the 
data MNAR problem 
using the widely adopted
potential outcome framework~\citep{Neyman1990, Imbens-Rubin2015}. Let $\cU = \{1, 2, ..., U\}$, $\cI = \{1, 2, ..., I\}$ and $\cD = \cU \times \cI$
  be the index sets of users, items, 
  all user-item pairs.  
 For each $(u,i) \in \cD$, we have a treatment    $o_{u,i} \in \{0, 1\}$, a feature vector $x_{u,i}$, and an observed rating $r_{u,i}$, 
where $o_{u,i}=1$ if user $u$ rated the item $i$ in the logging data, $o_{u,i} = 0$ if the rating is missing.
Let $r_{u,i}(1)$  
 is defined as the  
be the rating that would be observed if item $i$ had been rated by user $u$, which is observable only for $\cO = \{ (u,i) \mid (u,i)\in \cD, o_{u,i} =  1 \}$. Many tasks in RS can be formulated by
predicting the potential outcome  $r_{u,i}(1)$ using feature $x_{u,i}$ for each $(u,i)$. 

Let $\hat r_{u,i}(1) = f(x_{u,i}; \phi)$ be a prediction model with parameters $\phi$. If all the potential outcomes $\{ r_{u,i}(1): (u,i) \in \cD \}$ were observed, the ideal loss function for solving parameters $\phi$ is given as 
		\begin{equation*}
				\cL_{ideal}(\phi)  
				=  |\cD|^{-1}  \sum_{(u,i) \in \cD} e_{u,i},  
		\end{equation*}    
where $e_{u,i}$ is the prediction error, such as the squared loss  
$e_{u,i} = (\hat r_{u,i}(1)  - r_{u,i}(1) )^2$.  
$\cL_{ideal}(\phi)$ can be regarded as a benchmark of unbiased loss function, even though it is infeasible due to the missingness of $\{ r_{u,i}(1): o_{u,i} = 0\}$. 
  As such, a variety of methods are developed through approximating $\cL_{ideal}(\phi)$ to address the selection bias, in which the propensity-based estimators
  show the 
relatively superior performance~\citep{Schnabel-Swaminathan2016, Wang-Zhang-Sun-Qi2019}, and the IPS and DR estimators are 
    \begin{align*}
    \cE_{IPS} = |\cD|^{-1} \sum_{(u,i) \in \cD}  \frac{ o_{u,i}e_{u,i} }{ \hat p_{u, i} } \quad \text{and} \quad
         \cE_{DR} = |\cD|^{-1} \sum_{(u,i) \in \cD} \Big [ \hat e_{u,i}  +  \frac{ o_{u,i} (e_{u,i} -  \hat e_{u,i}) }{ \hat p_{u, i} } \Big ], 
     \end{align*}   
where $\hat p_{u,i}$ is an estimate of propensity score $p_{u,i} := \P(o_{u,i}=1 | x_{u,i})$, $\hat e_{u,i}$ is an estimate of $e_{u,i}$. 
 



\subsection{Related Work}

 {\bf Debiased learning in recommendation.}  The data collected in RS suffers from various biases~\citep{Chen-etal2020, Wu-etal2022}, which are entangled with the true preferences of users and pose a great  challenge to unbiased learning.  
 There is increasing interest in coping with different biases in recent years~\citep{zhang2021causal, ai2018unbiased, DBLP:conf/recsys/LiuCY16,  liu2021mitigating}.
 \citet{Schnabel-Swaminathan2016} proposed using inverse propensity score (IPS) and self-normalized IPS (SNIPS) methods to address the selection bias on data missing not at random, \citet{saito2019unbiased} and \citet{saito2020ips} extended them to implicit feedback data. 
 \citet{marlin2012collaborative} and \citet{steck2013evaluation} 
 derived an error imputation-based (EIB) unbiased learning method. These three approaches adopt two-phase learning~\citep{Wang-etal2021}, which first learns a propensity/imputation model and then applies it to construct an unbiased estimator of the ideal loss to train the recommendation model.   
 A doubly robust joint learning (DR-JL)  method~\citep{Wang-Zhang-Sun-Qi2019} was proposed by combining the IPS and EIB approaches. 
 Subsequently, strands of enhanced joint learning methods were developed, including MRDR~\citep{MRDR}, Multi-task DR~\citep{Zhang-etal2020}, DR-MSE~\citep{Dai-etal2022}, BRD-DR~\citep{Ding-etal2022}, TDR~\cite{TDR},  
  uniform data-aware methods~\citep{bonner2018causal,liu2020general, Chen-etal2021, Wang-etal2021, Balance2023} that aimed to seek better recommendation strategies by leveraging a small uniform dataset, and  
 multiple robust method~\citep{MR2023} that specifies multiple propensity and imputation models and achieves unbiased learning    
  if any of the propensity models, imputation models, or even a linear combination of these models can accurately estimate the true propensities or prediction errors. 
 \citet{Chen-etal2020} reviewed various biases in RS and discussed the recent progress on debiasing tasks. 
  \citet{Wu-etal2022} established the connections between the biases in causal inference and the biases, thereby presenting the formal causal definitions for RS. 




{\bf Stabilized causal effect estimation.}  The proposed method builds on the stabilized average causal effect estimation approaches in causal inference. 
\cite{Molenberghs-etal2015} summarized the limitations of doubly robust methods, including unstable 
 to small propensities~\citep{Kang-Schafer-2007, Wu-Han2022},  unboundedness~\citep{Laan-Rose2011}, and large variance~\citep{Tan-2007}. 
 These issues inspired a series of stabilized causal effect estimation methods in statistics~\citep{Kang-Schafer-2007, Bang-Robins-2005, Laan-Rose2011, Molenberghs-etal2015}.  
 Unlike previous works that focused only on achieving learning with unbiasedness in RS, this paper provides a new perspective to develop doubly robust estimators with much more stable statistical properties.

\section{Stabilized Doubly Robust Estimator}
In this section, we elaborate the limitations of DR methods and propose a stabilized DR (SDR) estimator with a weaker reliance on extrapolation. Theoretical analysis shows that SDR has bounded bias and generalization error bound for arbitrarily small propensities, while IPS and DR don't.

\subsection{Motivation} 


Even though DR estimator has double robustness property, its performance could be significantly improved if the following three stabilization aspects can be enhanced.  


{\bf More stable to small propensities.} As shown in~\citet{Schnabel-Swaminathan2016},  \citet{Wang-Zhang-Sun-Qi2019} and \citet{MRDR}, if there exist some extremely small estimated propensity scores, the IPS/DR estimator and its bias, variance, and tail bound are unbounded, 
 deteriorating the prediction accuracy. 
     What's more, such problems are widespread in practice, given the fact that there are many long-tailed users and items in RS, resulting in the presence of extreme propensities.   

{\bf More stable through weakening extrapolation.} DR relies more on extrapolation because the imputation model in DR is learned from the exposed events $\mathcal{O}$ and extrapolated to the unexposed events.
If the distributional disparity of $e_{u,i}$ on $o_{u,i}=0$ and $o_{u,i}=1$ is large, the imputed errors 
 are likely to be inaccurate on the unexposed events and incur bias of DR. Therefore, it is beneficial to reduce bias if we can develop an enhanced DR method with weaker reliance on extrapolation. 

{\bf More stable training process of updating a prediction model.} In general, alternating training between models results in better performance. From Figure \ref{fig:sdr}, \citet{Wang-Zhang-Sun-Qi2019} proposes joint learning for DR, alternatively updating the error imputation and prediction models with given estimated propensities. Double learning~\citep{MRDR} further incorporates parameter sharing between the imputation and prediction models. 
Bi-level optimization~\citep{Wang-etal2021, Chen-etal2021} can be viewed as alternately updating the prediction model and the other parameters used to estimate the loss. To the best of our knowledge, this is the first paper that proposes a algorithm to update the three models (i.e., error imputation model, propensity model, and prediction model) separately using different optimizers, which may resulting in more stable and accurate rating predictions.
 

\subsection{Stabilized Doubly Robust Estimator}\label{sec:3.2}




We propose a stabilized doubly robust (SDR) estimator that has a weaker dependence on extrapolation and is robust to small propensities. 
The SDR estimator consists of the following three steps.

{\bf Step 1 (Initialize imputed errors).} Pre-train imputation model $\hat e_{u,i}$, let $\hat \cE \triangleq {|\cD|}^{-1} \sum_{(u,i)\in \cD} \hat e_{u,i}$.

{\bf Step 2 (Learn constrained propensities).} Learn a propensity model $\hat p_{u,i}$ satisfying 
\begin{align} \label{step2}
\frac{1}{|\cD|} \sum_{(u,i)\in  \cD} {\frac{o_{u,i}}{\hat p_{u,i}}}{\left(\hat e_{u,i}-\hat \cE\right)}	  = 0.
\end{align}


{\bf Step 3 (SDR estimator).} The SDR estimator is given as
\begin{align*}
\cE_{SDR} 
= \sum_{(u,i)\in  \cD} \frac{o_{u,i} e_{u,i}}{\hat p_{u,i}}\Big / \sum_{(u,i)\in  \cD} \frac{o_{u,i}}{\hat p_{u,i}} \triangleq \sum_{(u,i)\in  \cD}  w_{u,i} e_{u,i}, 
\end{align*}
where $w_{u,i} =\frac{o_{u,i}}{\hat p_{u,i}}\big / \sum_{(u,i)\in  \cD} \frac{o_{u,i}}{\hat p_{u,i}}$. It can be seen that SDR estimator has the 
same form as SNIPS estimator, but the propensities are learned  differently. In SDR, the estimation of propensity model relies on the imputed errors, whereas not in SNIPS.


Each step in the construction of SDR estimator 
plays a different role. Specifically, the Step 2 is designed to enable double robustness property as shown in Theorem \ref{th1} (see Appendix \ref{app-th1} for proofs).   
\begin{theorem}[Double Robustness] 
\label{th1} 
$\cE_{SDR}$ is an asymptotically unbiased\footnote{Asymptotically unbiased means unbiasedness as the sample size goes to infinity.} estimator of $\cL_{ideal}$, 
when either the learned propensities $\hat p_{u,i}$ or the imputed errors $\hat e_{u,i}$ are accurate for all user-item pairs.
\end{theorem}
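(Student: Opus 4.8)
The plan is to first rewrite $\cE_{SDR}$ in a form that exposes an explicit doubly robust structure, and then treat the two robustness regimes separately. Write $\hat S \triangleq |\cD|^{-1}\sum_{(u,i)\in\cD} o_{u,i}/\hat p_{u,i}$, so that $w_{u,i} = (o_{u,i}/\hat p_{u,i})\big/(|\cD|\,\hat S)$, and observe that the Step~2 constraint \eqref{step2} is equivalent to $\sum_{(u,i)\in\cD} w_{u,i}\hat e_{u,i} = \hat \cE = |\cD|^{-1}\sum_{(u,i)\in\cD}\hat e_{u,i}$. Splitting $\cE_{SDR} = \sum_{(u,i)} w_{u,i}(e_{u,i}-\hat e_{u,i}) + \sum_{(u,i)} w_{u,i}\hat e_{u,i}$ and using this identity yields
\begin{equation*}
\cE_{SDR} \;=\; \frac{1}{|\cD|}\sum_{(u,i)\in\cD}\hat e_{u,i} \;+\; \frac{1}{\hat S}\cdot\frac{1}{|\cD|}\sum_{(u,i)\in\cD}\frac{o_{u,i}\,(e_{u,i}-\hat e_{u,i})}{\hat p_{u,i}},
\end{equation*}
i.e.\ exactly the DR estimator $\cE_{DR}$ with its bias-correction term rescaled by $1/\hat S$; this rescaling is precisely what makes the Step~2 constraint operative.

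If the imputed errors are accurate, $\hat e_{u,i}=e_{u,i}$ for every $(u,i)\in\cD$, then the correction term above is identically zero (its numerator vanishes term by term), so $\cE_{SDR} = |\cD|^{-1}\sum_{(u,i)}\hat e_{u,i} = |\cD|^{-1}\sum_{(u,i)} e_{u,i} = \cL_{ideal}$; the equality here is exact, not merely asymptotic. Equivalently, $\cE_{SDR}=\sum_{(u,i)} w_{u,i}e_{u,i}=\sum_{(u,i)} w_{u,i}\hat e_{u,i}=\hat\cE=\cL_{ideal}$ directly from \eqref{step2}.

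If the propensities are accurate, $\hat p_{u,i}=p_{u,i}$, I would argue by a law of large numbers combined with Slutsky's theorem, treating the pre-trained imputation model as fixed and using the standard unconfoundedness assumption $o_{u,i}\hDash r_{u,i}(1)\mid x_{u,i}$. Since $\E[o_{u,i}/p_{u,i}\mid x_{u,i}]=1$, the denominator satisfies $\hat S\to 1$; and since $\E[o_{u,i}(e_{u,i}-\hat e_{u,i})/p_{u,i}\mid x_{u,i}]=\E[e_{u,i}-\hat e_{u,i}\mid x_{u,i}]$, the numerator of the correction term is asymptotically $|\cD|^{-1}\sum_{(u,i)}(e_{u,i}-\hat e_{u,i})$. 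Hence $\cE_{SDR}\to|\cD|^{-1}\sum_{(u,i)}\hat e_{u,i}+|\cD|^{-1}\sum_{(u,i)}(e_{u,i}-\hat e_{u,i})=|\cD|^{-1}\sum_{(u,i)} e_{u,i}=\cL_{ideal}$, giving the asserted asymptotic unbiasedness.

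I expect the main obstacle to be the self-normalization. Because $\cE_{SDR}$ is a ratio of two random sums, the propensity-accurate branch cannot deliver exact unbiasedness and instead requires numerator and denominator to be controlled jointly --- this is where Slutsky / the continuous mapping theorem enters and where the ``asymptotic'' qualifier originates --- together with the regularity needed for the LLN (notably propensities bounded away from $0$, so the summands have finite variance). For full rigor one should also note that an accurate propensity is consistent with Step~2, since the left-hand side of \eqref{step2} has mean $0$ and therefore vanishes asymptotically. The imputation-accurate branch, by contrast, is a one-line exact identity, so essentially all the work is in the propensity-accurate case.
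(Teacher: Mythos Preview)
Your proposal is correct and follows essentially the same route as the paper: your decomposition $\cE_{SDR}=\hat\cE+\hat S^{-1}\cdot|\cD|^{-1}\sum o_{u,i}(e_{u,i}-\hat e_{u,i})/\hat p_{u,i}$ is exactly the identity the paper derives by combining the constraint \eqref{step2} with the SNIPS defining equation (their equation~(\ref{eq5}), and equivalently~(\ref{eq-s1})), and both cases are then handled the same way---an exact identity $\cE_{SDR}=\hat\cE$ when imputation is accurate, and an LLN/Slutsky argument for the self-normalized ratio when propensities are accurate. You are a bit more explicit about the asymptotic machinery and the regularity caveats than the paper, which simply cites the SNIPS consistency result for the propensity branch; note also that in the paper's setup the only randomness is in the Bernoulli $o_{u,i}$'s (with $e_{u,i},\hat e_{u,i}$ treated as fixed), so your appeal to unconfoundedness is not actually needed.
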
 


We provide an intuitive way to illustrate the rationale of SDR.  
{\bf On the one hand}, if the propensities can be accurately estimated (i.e., $\hat p_{u,i} = p_{u,i}$) by using a
common model (e.g., logistic regression) without imposing constraint (\ref{step2}). Then the expectation of the left hand side of constraint (\ref{step2}) becomes  
		$$
			\mathbb{E}_{\mathcal{O}} \Big [  \frac{1}{|\mathcal{D}|} \sum_{(u,i)\in  \mathcal{D}} \frac{o_{u,i}}{\hat p_{u,i}}  \left ( \hat e_{u,i}- \mathcal{\hat E}  \right  ) \Big	] =   \frac{1}{|\mathcal{D}|} \sum_{(u,i)\in  \mathcal{D}}  \left ( \hat e_{u,i}- \mathcal{\hat E}  \right  ) \equiv 0,
		$$
	which indicates the constraint (\ref{step2}) always holds as the sample size goes to infinity by the strong law of large numbers\footnote{This is the reason why we adopt the notation of "asymptotically unbiased".}, irrespective of the accuracy of  the imputed errors $\hat e_{u,i}$.  	
This implies that the constraint (\ref{step2}) \emph{imposes almost no restriction} on the estimation of propensities. In this case, the SDR estimator is almost equivalent to the original SNIPS estimator.  
{\bf On the other hand}, if the propensities \emph{cannot} be accurately estimated by using a common 
model, but the imputed errors are accurate (i.e., $\hat e_{u,i} = e_{u,i}$).  In this case,  $\mathcal{\hat E}$ is an unbiased estimator. Specifically, $\cE_{SDR}$ satisfies 
\begin{align}\label{sdr}
\frac{1}{|\cD|} \sum_{(u,i)\in  \cD} \frac{o_{u,i}(e_{u,i}-\cE_{SDR})}{\hat p_{u,i}} = 0.
\end{align} 
Combining the constraint (\ref{step2}) and equation (\ref{sdr}) gives
\begin{align}  \label{eq5} 
\frac{1}{|\cD|} \sum_{(u,i)\in  \cD} \left[
\frac{o_{u,i}(e_{u,i}-\hat e_{u,i})}{\hat p_{u,i}} + \frac{o_{u,i}(\hat  \cE-\cE_{SDR})}{\hat p_{u,i}}
\right] = 0,
\end{align}  
where the first term equals to 0 if $\hat e_{u,i} = e_{u,i}$, 
it implies that 
$\cE_{SDR} =\hat \cE$, then the unbiasedness of $\cE_{SDR}$ follows immediately from the unbiasedness of $\hat \cE$.
 

In addition, Step 3 is designed for two main reasons to achieve stability. First, $\mathcal{E}_{SDR}$ is more robust to extrapolation compared with DR. This is because the propensities are learned from the entire data and thus have less requirement on extrapolation.
Second, $\mathcal{E}_{SDR}$ is more stable to small propensities, since the self-normalization imposes the weight $w_{u,i}$ to fall on the interval [0,1].

 In summary, forcing the propensities to satisfy the constraint (\ref{step2}) makes the SDR estimator not only doubly robust, but also captures the advantages of both SNIPS  and DR estimators. The design of SDR enables the constrained propensities to adaptively 
 find the direction of debiasing if either the learned  propensities without imposing constraint (\ref{step2}) or the imputed errors are accurate.

\subsection{Theoretical Analysis of Stableness}
Through theoretical analysis, we note that previous debiasing estimators such as  
IPS~\citep{Schnabel-Swaminathan2016} and DR-based methods~\citep{Wang-Zhang-Sun-Qi2019, MRDR} tend to have infinite biases, variances, tail bound, and corresponding generalization error bounds, in the presence of extremely small estimated propensities. 
Remarkably, the proposed SDR estimator doesn't suffer from such problems and is stable to arbitrarily small propensities, as shown in the following Theorems (see Appendixes \ref{app-a}, \ref{secondorder} and \ref{app-b} for proofs).



\begin{theorem}[Bias of SDR] \label{thbias} Given imputed errors $\hat e_{u,i}$ and learned propensities $\hat p_{u,i}$ satisfying the stabilization constraint  (\ref{step2}), with $\hat p_{u,i}>0$ for all user-item pairs, the 
bias 
of $\cE_{SDR}$ is 
\[ 
{\operatorname{Bias}}(\cE_{SDR}) = \Biggl |\frac{1}{|\cD|} \sum_{(u,i)\in  \cD} \left ( \delta_{u, i}-\frac{\sum_{(u,i)\in  \cD}{ \delta_{u, i}p_{u,i}}/\hat p_{u,i}}{\sum_{(u,i)\in  \cD}{p_{u,i}}/\hat p_{u,i}} \right ) \Biggr | + O( |\cD|^{-1} ), 
\]
where $\delta_{u, i}= e_{u, i}- \hat e_{u, i}$ is the error deviation.
\end{theorem}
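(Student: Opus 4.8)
The plan is to rewrite $\cE_{SDR}$ as a ratio of two sums of independent Bernoulli-weighted terms, use the stabilization constraint~(\ref{step2}) to cancel the (possibly inaccurate) imputation level $\hat\cE$, and then extract the leading order of the expectation of that ratio by the delta method.

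\emph{Step 1 (algebraic reduction).} I would first split the numerator of $\cE_{SDR}$ as $\sum_{(u,i)\in\cD}\frac{o_{u,i}e_{u,i}}{\hat p_{u,i}} = \sum_{(u,i)\in\cD}\frac{o_{u,i}\delta_{u,i}}{\hat p_{u,i}} + \sum_{(u,i)\in\cD}\frac{o_{u,i}\hat e_{u,i}}{\hat p_{u,i}}$ and apply~(\ref{step2}) in the equivalent form $\sum_{(u,i)\in\cD}\frac{o_{u,i}\hat e_{u,i}}{\hat p_{u,i}} = \hat\cE\sum_{(u,i)\in\cD}\frac{o_{u,i}}{\hat p_{u,i}}$. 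Writing $S \triangleq \sum_{(u,i)\in\cD}\frac{o_{u,i}\delta_{u,i}}{\hat p_{u,i}}$ and $B \triangleq \sum_{(u,i)\in\cD}\frac{o_{u,i}}{\hat p_{u,i}}$, this gives $\cE_{SDR} = \hat\cE + S/B$. Since also $\cL_{ideal} = \hat\cE + |\cD|^{-1}\sum_{(u,i)\in\cD}\delta_{u,i}$, the $\hat\cE$ terms cancel and $\cE_{SDR} - \cL_{ideal} = S/B - |\cD|^{-1}\sum_{(u,i)\in\cD}\delta_{u,i}$; note that no accuracy assumption on $\hat e_{u,i}$ is used here.

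\emph{Step 2 (delta method on the ratio).} Treating $\hat p_{u,i},\hat e_{u,i}$ as given and $o_{u,i}\sim\mathrm{Bernoulli}(p_{u,i})$ as independent, I would expand $1/B$ about $\mu_B \triangleq \mathbb{E}[B] = \sum_{(u,i)\in\cD}p_{u,i}/\hat p_{u,i}$ via $1/B = \mu_B^{-1}\bigl(1 - (B-\mu_B)/\mu_B + (B-\mu_B)^2/\mu_B^2 - \cdots\bigr)$, multiply by $S$, and take expectations:
$$\mathbb{E}\Bigl[\tfrac{S}{B}\Bigr] = \frac{\mu_S}{\mu_B} - \frac{\Cov(S,B)}{\mu_B^2} + \frac{\mathbb{E}\!\left[S(B-\mu_B)^2\right]}{\mu_B^3} - \cdots,$$
where $\mu_S \triangleq \mathbb{E}[S] = \sum_{(u,i)\in\cD}p_{u,i}\delta_{u,i}/\hat p_{u,i}$. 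Since $S,B$ are sums of $|\cD|$ independent bounded summands (for the given population), $\mu_B = \Theta(|\cD|)$, $|\mu_S| = O(|\cD|)$, $\Cov(S,B) = O(|\cD|)$, $\mathbb{E}[S(B-\mu_B)^2] = O(|\cD|^2)$, and the later terms are smaller; hence each correction is $O(|\cD|^{-1})$ and $\mathbb{E}[S/B] = \mu_S/\mu_B + O(|\cD|^{-1})$.

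\emph{Step 3 (assembly) and the main obstacle.} Substituting into Step 1, writing the constant $\mu_S/\mu_B$ as the average of $|\cD|$ identical copies so it merges with $|\cD|^{-1}\sum\delta_{u,i}$, and taking absolute values yields exactly
$$\operatorname{Bias}(\cE_{SDR}) = \Bigl|\tfrac{1}{|\cD|}\sum_{(u,i)\in\cD}\bigl(\delta_{u,i} - \tfrac{\sum_{(u,i)\in\cD}\delta_{u,i}p_{u,i}/\hat p_{u,i}}{\sum_{(u,i)\in\cD}p_{u,i}/\hat p_{u,i}}\bigr)\Bigr| + O(|\cD|^{-1}).$$
The step I expect to be the main obstacle is making the ratio expansion in Step 2 rigorous: $B$ is random, can be small, and equals $0$ with probability $\prod_{(u,i)}(1-p_{u,i})$, so the tail of the $1/B$ series must be controlled. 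I would condition on the high-probability event $\{B \ge \tfrac12\mu_B\}$ (a Chernoff/Bernstein bound, using $\hat p_{u,i}>0$ so each $1/\hat p_{u,i}$ is bounded for the given population), bound the remainder there, and absorb the exponentially small complement into $O(|\cD|^{-1})$; this is also where one tacitly needs the propensities bounded away from $0$ for the rate to be uniform. A minor caveat: the constrained $\hat p_{u,i}$ depend on the $o_{u,i}$ through~(\ref{step2}), so, following the surrounding literature, the bias is understood conditionally on the learned propensities.
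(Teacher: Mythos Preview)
Your proposal is correct and follows essentially the same route as the paper: the paper also uses constraint~(\ref{step2}) to rewrite $\cE_{SDR}$ as the ratio $\bar v/\bar w$ with $v_{u,i}=o_{u,i}(\delta_{u,i}+\hat\cE)/\hat p_{u,i}$ and $w_{u,i}=o_{u,i}/\hat p_{u,i}$, then applies a second-order Taylor/delta expansion of $f(v,w)=v/w$ at $(\mu_v,\mu_w)$ and shows the covariance, variance, and remainder terms are $O(|\cD|^{-1})$. Your Step~1 merely pulls the deterministic $\hat\cE$ out of the ratio before expanding, which is an equivalent (and slightly cleaner) variant, and your discussion of controlling the $1/B$ tail is in fact more careful than the paper, which simply asserts $\mathbb{E}[R(\tilde v,\tilde w)]=o(|\cD|^{-1})$.
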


Theorem \ref{thbias} shows the bias of the SDR estimator consisting of a dominant term given by the difference between $\delta_{u,i}$ and its weighted average, and a negligible term of order $O(|\cD|^{-1})$. The fact that the $\delta_{u,i}$ and its convex combinations are bounded, shows that the bias is bounded for arbitrarily small $\hat p_{ u, i}$. Compared to the $\operatorname{Bias}\left(\mathcal{E}_{IPS}\right)={|\mathcal{D}|}^{-1}|\sum_{u, i \in \mathcal{D}} {(\hat{p}_{u, i}-p_{u, i})e_{u, i}}/{\hat{p}_{u, i}} |$ and 
$\operatorname{Bias}\left(\mathcal{E}_{DR}\right)={|\mathcal{D}|}^{-1}|\sum_{u, i \in \mathcal{D}} {(\hat{p}_{u, i}-p_{u, i})\delta_{u, i}}/{\hat{p}_{u, i}} |$, it indicates that IPS and DR will have extremely large bias when there exists an extremely small $\hat p_{u,i}$.

\begin{theorem}[Variance of SDR]\label{thvar}
\label{thm-var}Under the conditions of Theorem \ref{thbias}, the variance of $\cE_{SDR}$ is
\[
\operatorname{Var}\left(\mathcal{E}_{SDR}\right)=\frac{\sum_{(u,i) \in \cD} p_{u,i}(1-p_{u,i})h^2_{u,i}/\hat p^2_{u,i}}{\left(\sum_{(u,i) \in \cD}p_{u,i}/\hat p_{u,i}\right)^2}+O( |\cD|^{-2} ),
\]
where $h_{u,i}=(e_{u,i}-\hat e_{u,i})-\sum_{(u,i) \in \cD}\{p_{u,i}(e_{u,i}-\hat e_{u,i})/\hat p_{u,i}\}/\sum_{(u,i) \in \cD}\{p_{u,i}/\hat p_{u,i}\}$ is a bounded difference between $e_{u,i}-\hat e_{u,i}$ and its weighted average. 
\end{theorem}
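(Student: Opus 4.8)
The plan is to treat $\hat p_{u,i}$, $\hat e_{u,i}$, $e_{u,i}$ as fixed and the $\{o_{u,i}\}_{(u,i)\in\mathcal{D}}$ as independent $\mathrm{Bernoulli}(p_{u,i})$ variables, and to analyze $\mathcal{E}_{SDR}$ as a ratio of two sample averages via the delta method, reusing the Taylor-expansion template already set up for Theorem~\ref{thbias}. Write $n=|\mathcal{D}|$ and $e_{u,i}=\hat e_{u,i}+\delta_{u,i}$. First I would use the stabilization constraint (\ref{step2}) to strip off the deterministic part: since $\sum_{(u,i)\in\mathcal{D}} o_{u,i}\hat e_{u,i}/\hat p_{u,i}=\hat{\mathcal{E}}\sum_{(u,i)\in\mathcal{D}} o_{u,i}/\hat p_{u,i}$, substituting $e_{u,i}=\hat e_{u,i}+\delta_{u,i}$ into the definition of $\mathcal{E}_{SDR}$ gives
\[
\mathcal{E}_{SDR}=\hat{\mathcal{E}}+\frac{C}{B},\qquad C:=\frac1n\sum_{(u,i)\in\mathcal{D}}\frac{o_{u,i}\delta_{u,i}}{\hat p_{u,i}},\qquad B:=\frac1n\sum_{(u,i)\in\mathcal{D}}\frac{o_{u,i}}{\hat p_{u,i}}.
\]
Since $\hat{\mathcal{E}}$ is deterministic, $\operatorname{Var}(\mathcal{E}_{SDR})=\operatorname{Var}(C/B)$, and everything reduces to the variance of $C/B$, whose numerator and denominator have means $\bar C:=\mathbb{E}[C]=n^{-1}\sum_{(u,i)\in\mathcal{D}} p_{u,i}\delta_{u,i}/\hat p_{u,i}$ and $\bar B:=\mathbb{E}[B]=n^{-1}\sum_{(u,i)\in\mathcal{D}} p_{u,i}/\hat p_{u,i}$.

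Next I would Taylor-expand $g(C,B)=C/B$ around $(\bar C,\bar B)$ to second order. Its first-order part is
\[
\frac1{\bar B}(C-\bar C)-\frac{\bar C}{\bar B^2}(B-\bar B)=\frac1{\bar B\,n}\sum_{(u,i)\in\mathcal{D}}\frac{o_{u,i}-p_{u,i}}{\hat p_{u,i}}\Bigl(\delta_{u,i}-\frac{\bar C}{\bar B}\Bigr),
\]
and the crucial observation is that $\bar C/\bar B=\bigl(\sum_{(u,i)\in\mathcal{D}} p_{u,i}\delta_{u,i}/\hat p_{u,i}\bigr)\big/\bigl(\sum_{(u,i)\in\mathcal{D}} p_{u,i}/\hat p_{u,i}\bigr)$ is exactly the weighted average subtracted in the definition of $h_{u,i}$, so $\delta_{u,i}-\bar C/\bar B=h_{u,i}$. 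Using independence of the $o_{u,i}$ and $\operatorname{Var}(o_{u,i})=p_{u,i}(1-p_{u,i})$, the variance of this first-order part equals $\bar B^{-2}n^{-2}\sum_{(u,i)\in\mathcal{D}} p_{u,i}(1-p_{u,i})h_{u,i}^2/\hat p_{u,i}^2$; since $\bar B\,n=\sum_{(u,i)\in\mathcal{D}} p_{u,i}/\hat p_{u,i}$, this is precisely the stated leading term. I would also note that $\mathbb{E}[C/B]=\bar C/\bar B+O(n^{-1})$, so centering at $\bar C/\bar B$ rather than at $\mathbb{E}[C/B]$ costs only $O(n^{-2})$ in the variance.

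It then remains to show the second-order remainder $R$ from the expansion satisfies $\operatorname{Var}(R)=O(n^{-2})$ and contributes only $O(n^{-2})$ through its covariance with the first-order part (by Cauchy--Schwarz). I would establish this using boundedness of the errors $\delta_{u,i}$ together with $\hat p_{u,i}>0$ (so $C,B,\bar C,\bar B$ are bounded and $\bar B$ is bounded away from $0$), the bounds $\operatorname{Var}(B),\operatorname{Var}(C)=O(n^{-1})$ so that $B$ concentrates on the event $\{B\ge\bar B/2\}$ whose complement has exponentially small probability, and a Lagrange-remainder estimate for $g$ on that event. I expect this last step to be the main obstacle, precisely because $\hat p_{u,i}$ may be arbitrarily small: one must check that the hidden constants are controlled by aggregate quantities such as $n^{-1}\sum_{(u,i)\in\mathcal{D}} 1/\hat p_{u,i}$ rather than by $\min_{(u,i)}\hat p_{u,i}$, so that the small-propensity blow-up in the numerator of the ratio is cancelled by the identical blow-up in its denominator — which is exactly the stabilization mechanism the theorem is meant to exhibit. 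The first two steps, by contrast, are essentially algebra once the delta-method machinery from Theorem~\ref{thbias} is available.
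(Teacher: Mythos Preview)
Your proposal is correct and follows essentially the same approach as the paper: both reduce $\mathcal{E}_{SDR}$ (via the stabilization constraint) to a ratio statistic, Taylor-expand around the means, identify the first-order linear combination with $h_{u,i}$, and absorb the higher-order terms as $O(|\mathcal{D}|^{-2})$, using the $O(|\mathcal{D}|^{-1})$ bias result from Theorem~\ref{thbias} to handle the discrepancy between centering at $\bar C/\bar B$ versus at $\mathbb{E}[C/B]$. Your treatment of the remainder is, if anything, more explicit than the paper's, which simply writes the Lagrange remainder as $O_p(|\mathcal{D}|^{-1})$ and squares it without further justification.
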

Theorem \ref{thvar} shows the variance of the SDR estimator consisting of a dominant term and a negligible term of order $O(|\cD|^{-2})$. The boundedness of the variance for arbitrarily small $\hat p_{ u, i}$ is given directly from the fact that SDR has a bounded range given by the self-normalized form. 
Compared to the $\operatorname{Var}\left(\mathcal{E}_{IPS}\right)={|\mathcal{D}|}^{-2}\sum_{u, i \in \mathcal{D}} {{p}_{u, i}(1-p_{u, i})e^2_{u, i}}/{\hat{p}^2_{u, i}}$ and 
$\operatorname{Var}\left(\mathcal{E}_{DR}\right)={|\mathcal{D}|}^{-2}\sum_{u, i \in \mathcal{D}} {{p}_{u, i}(1-p_{u, i})(e_{u, i}-\hat e_{u, i})^2}/{\hat{p}^2_{u, i}} $, it indicates that IPS and DR will have extremely large variance (tend to infinity) when there exist an extremely small $\hat p_{u,i}$ (tends to 0).
\begin{theorem}[Tail Bound of SDR] \label{th4} 
{Under the conditions of Theorem \ref{thbias},} 
for any prediction model, with probability $1-\eta$, the deviation of $\cE_{SDR}$ from its expectation has the following tail bound 
\[ 
\left|\cE_{SDR}-\bfE_{{\cO}} (\cE_{SDR})\right| \leq\sqrt{\frac{1}{2}{\log \left(\frac{4}{\eta}\right)} \sum_{ (u, i) \in \mathcal{D}}\frac{(\delta_{\operatorname{max}}-\delta_{u, i})^2+(\delta_{u, i}-\delta_{\operatorname{min}})^2}{\{1+\hat p_{u,i}(\sum_{\cD\setminus(u,i)}p_{u,i}/\hat p_{u,i}-\epsilon^{\prime})\}^2}}
\]  
where $\delta_{\operatorname{min}}=\operatorname{min}_{(u,i)\in \cD}\delta_{u, i}$, $\delta_{\operatorname{max}}=\operatorname{max}_{(u,i)\in \cD}\delta_{u, i}$,  $\epsilon^{\prime}={\small \sqrt{ \log(4/\eta) / 2 \cdot   \sum_{\cD\setminus(u,i)} 1/ \hat p^2_{u,i}}}$, and {$\cD\setminus(u,i)$ is the set of $\cD$ excluding the element $(u,i)$.}  
\end{theorem}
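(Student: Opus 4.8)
The plan is to express $\cE_{SDR}$ as a function of the independent exposure indicators $\{o_{u,i}\}_{(u,i)\in\cD}$ (with $e_{u,i},\hat e_{u,i},\hat p_{u,i}$ held fixed) and then run a bounded-differences (McDiarmid) argument. First I would use the stabilization constraint~(\ref{step2}) to simplify the estimator: (\ref{step2}) says precisely that $\sum_{(u,i)} w_{u,i}\hat e_{u,i}=\hat\cE$, so together with $\sum_{(u,i)} w_{u,i}=1$ we obtain
\[
\cE_{SDR}=\sum_{(u,i)\in\cD} w_{u,i} e_{u,i}=\hat\cE+\sum_{(u,i)\in\cD} w_{u,i}\,\delta_{u,i},\qquad \delta_{u,i}=e_{u,i}-\hat e_{u,i},
\]
with $\hat\cE$ a deterministic constant. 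Hence $\cE_{SDR}-\mathbb{E}_{\cO}(\cE_{SDR})=g(\cO)-\mathbb{E}_{\cO}[g(\cO)]$ for the function $g(\cO)=\sum_{(u,i)} w_{u,i}\delta_{u,i}=\big(\sum_{(u,i)} o_{u,i}\delta_{u,i}/\hat p_{u,i}\big)\big/\big(\sum_{(u,i)} o_{u,i}/\hat p_{u,i}\big)$ of the independent $\mathrm{Bernoulli}(p_{u,i})$ variables, so it suffices to control the deviation of $g$.

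The key step is to compute how $g$ changes when a single coordinate $o_{u,i}$ is flipped. Writing $S_{-}=\sum_{(k,l)\in\cD\setminus(u,i)} o_{k,l}/\hat p_{k,l}$ and letting $\bar\delta_{-}$ denote the associated weighted average of $\{\delta_{k,l}\}_{(k,l)\neq(u,i)}$, a direct evaluation of the two values of the ratio $g$ yields
\[
g\big|_{o_{u,i}=1}-g\big|_{o_{u,i}=0}=\frac{\delta_{u,i}-\bar\delta_{-}}{1+\hat p_{u,i}S_{-}}.
\]
Since $\bar\delta_{-}$ is a convex combination of the $\delta_{k,l}$ it lies in $[\delta_{\min},\delta_{\max}]$, and because $\delta_{\min}\le\delta_{u,i}\le\delta_{\max}$ we get $(\delta_{u,i}-\bar\delta_{-})^2\le(\delta_{\max}-\delta_{u,i})^2+(\delta_{u,i}-\delta_{\min})^2$, which is exactly the numerator appearing in the claim. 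For the denominator, $S_{-}$ is a sum of independent variables supported in $[0,1/\hat p_{k,l}]$ with mean $\sum_{\cD\setminus(u,i)} p_{k,l}/\hat p_{k,l}$, so Hoeffding's inequality with the quantity $\epsilon'$ from the statement gives $S_{-}\ge\sum_{\cD\setminus(u,i)} p_{k,l}/\hat p_{k,l}-\epsilon'$ with high probability; on that event the one-coordinate change is at most $c_{u,i}:=\sqrt{(\delta_{\max}-\delta_{u,i})^2+(\delta_{u,i}-\delta_{\min})^2}\big/\{1+\hat p_{u,i}(\sum_{\cD\setminus(u,i)} p_{k,l}/\hat p_{k,l}-\epsilon')\}$.

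Finally I would invoke McDiarmid's inequality with the constants $c_{u,i}$, giving $\P(|g-\mathbb{E}_{\cO}g|\ge t)\le 2\exp(-2t^2/\sum_{(u,i)} c_{u,i}^2)$, and choose $t$ so that this probability together with the probability that some $S_{-}$ falls below its concentration level sums to $\eta$; the choice $t=\sqrt{\tfrac12\log(4/\eta)\sum_{(u,i)} c_{u,i}^2}$ then reproduces the stated bound. The main obstacle — the reason this is not a one-line application of McDiarmid — is that the bounded-difference constant for coordinate $(u,i)$ genuinely depends on the other coordinates through $S_{-}$ (it degrades all the way to $\delta_{\max}-\delta_{\min}$ when every other exposure vanishes), so one must restrict to the high-probability event on which the relevant $S_{-}$ concentrate and use a conditional / "bad-event" version of McDiarmid's inequality; carefully budgeting the two sources of failure probability is exactly what produces the $\log(4/\eta)$ factor instead of $\log(2/\eta)$.
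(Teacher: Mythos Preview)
Your proposal is correct and follows essentially the same route as the paper: rewrite $\cE_{SDR}$ as $\hat\cE$ plus a self-normalized average of the $\delta_{u,i}$, bound the one-coordinate change of this ratio in terms of $S_{-}=\sum_{\cD\setminus(u,i)}o_{k,l}/\hat p_{k,l}$, control $S_{-}$ via Hoeffding, and finish with McDiarmid, splitting the failure probability into two pieces of size $\eta/2$ to produce the $\log(4/\eta)$ factor. The only cosmetic difference is that the paper bounds $|\delta_{u,i}-\bar\delta_{-}|$ by a case split on whether $\delta_{u,i}\le(\delta_{\min}+\delta_{\max})/2$, obtaining $\delta_{\max}-\delta_{u,i}$ or $\delta_{u,i}-\delta_{\min}$ and then relaxing to the sum of squares, whereas you go directly to $(\delta_{\max}-\delta_{u,i})^2+(\delta_{u,i}-\delta_{\min})^2$; your observation that a conditional/``bad-event'' form of McDiarmid is really what is being used is in fact more explicit than the paper's own presentation.
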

Note that $\sum_{\cD\setminus(u,i)}p_{u,i}/\hat p_{u,i}=O( |\cD|)$ and $\epsilon^{\prime}=O( |\cD|^{1/2} )$ in Theorem \ref{th4}, it follows that the tail bound of the SDR estimator converges to 0 for large samples. In addition, the tail bound is bounded for arbitrarily small $\hat p_{ u, i}$. Compared to the tail bound of IPS and DR, with probability $1-\eta$, we have
\begin{gather*}
    \left|\mathcal{E}_{{IPS}}-\mathbb{E}_{\cO}\left[\mathcal{E}_{{IPS}}\right]\right| \leq \sqrt{\frac{\log \left(2/\eta\right)}{2|\mathcal{D}|^{2}} \sum_{(u, i) \in \mathcal{D}}\left(\frac{e_{u, i}}{\hat{p}_{u, i}}\right)^{2}}, 
\left|\mathcal{E}_{{DR}}-\mathbb{E}_{\cO}\left[\mathcal{E}_{{DR}}\right]\right| \leq \sqrt{\frac{\log \left(2/\eta\right)}{2|\mathcal{D}|^{2}} \sum_{(u, i) \in \mathcal{D}}\left(\frac{\delta_{u, i}}{\hat{p}_{u, i}}\right)^{2}}, 
\end{gather*}which are both unbounded when $\hat p_{u,i} \rightarrow 0$. For SDR in the prediction model training phase, the boundedness of the generalization error bound  (see Theorem \ref{th5} in Appendix \ref{app-e}) 
follows immediately from the boundedness of the bias and tail bound.  
The above analysis demonstrates that SDR can comprehensively mitigate the negative effects caused by extreme propensities and results in more stable predictions.
Theorems \ref{thbias}-\ref{th5} are stated under the constraint $(\ref{step2})$. If we estimate the propensities with constraint (\ref{step2}), but finally constraint (\ref{step2}) somehow doesn't hold exactly, the associated bias, variance, and generalization error bound of SDR are presented in Appendix \ref{app-f}.

\section{Cycle Learning with Stabilization} 
In this section, we propose a novel SDR-based cycle learning approach, that not only exploits the stable statistical properties of the SDR estimator itself, but also carefully designs the updating process among various models to achieve higher stability.
In general, inspired by the idea of value iteration in reinforcement learning~\citep{sutton2018reinforcement}, alternatively updating the model tends to achieve better predictive performance, as existing debiasing training approaches suggested~\citep{Wang-Zhang-Sun-Qi2019, MRDR, Chen-etal2021}. As shown in Figure \ref{fig:sdr}, the proposed approach dynamically interacts with three models, utilizing the propensity model and imputation model simultaneously in a differentiated way, which can be regarded as an extension of these methods. In cycle learning, given pre-trained propensities, the inverse propensity weighted imputation error loss is used to first obtain an imputation model, and then take the constraint (\ref{step2}) as the regularization term to train a stabilized propensity model and ensure the double robustness of SDR. Finally, the prediction model is updated by minimizing the SDR 
loss and used to readjust the imputed errors. By repeating the above update {processes} cyclically, the cycle learning approach can fully utilize and combine the advantages of the three models to achieve more accurate rating predictions.  

Specifically, the data MNAR leads to the presence of missing $r_{u,i}(1)$, so that all $e_{u,i}$ cannot be used directly. Therefore, we obtain imputed errors by learning a pseudo-labeling model $\Tilde r_{u,i}(1)$ parameterized by $\beta$, and the imputed errors $\hat e_{u,i} =\mathrm{CE}(\Tilde r_{u,i}(1), \hat r_{u,i}(1))$ are updated by minimizing
\begin{align*}  
 \cL_{e}\left(\phi, \alpha, \beta \right)=|\cD|^{-1} \sum_{ (u, i) \in \cD} \frac{o_{u,i} ( \hat{e}_{u, i}-e_{u, i})^{2}}{\pi(x_{u,i}; \alpha)}+\lambda_e \|\beta\|_{F}^{2},
\end{align*} 
where ${e}_{u, i}=\mathrm{CE}(r_{u,i}(1), \hat r_{u,i}(1))$, $\lambda_e\geq 0$,  $\hat  p_{u,i}=\pi(x_{u,i}; \alpha)$ is the propensity model, $\|\cdot\|_{F}^{2}$ is the Frobenius norm. For each observed ratings, the inverse of the estimated propensities are used for weighting to account for MNAR effects. Next, we consider two methods for estimating propensity scores, which are Naive Bayes with {Laplace smoothing} and logistic regression. {The former provides a wide range of opportunities for achieving stability constraint (\ref{step2}) through the selection of smoothing coefficients.} The latter requires user and item embeddings, which are obtained by employing MF before performing cycle learning. The learned propensities need to both satisfy the accuracy, which is evaluated with cross entropy, and meet the constraint (\ref{step2}) for stabilization and double robustness. 
The propensity model {$\pi(x_{u,i}; \alpha)$} is updated by using the loss $\cL_{ce}\left(\phi, \alpha, \beta \right)+\eta \cdot \cL_{{stable}}\left(\phi, \alpha, \beta \right)$, where $\cL_{ce}\left(\phi, \alpha, \beta \right)$ is cross entropy loss of propensity model and
\begin{align*}
\cL_{{stable}}\left(\phi, \alpha, \beta \right)= |\cD|^{-1} \Big \{\sum_{(u,i)\in  \cD} \frac{o_{u,i}}{\pi(x_{u,i};  \alpha)}{\left(\hat e_{u,i}-\hat \cE\right)}\Big\}^2+ \lambda_{stable} \|\alpha\|_{F}^{2},
\end{align*}
where {$\lambda_{stable}\geq 0$}, and $\eta$ is a hyper-parameter for trade-off. Finally, the prediction model $f(x_{u,i}; \phi)$ is updated by minimizing the SDR loss
\begin{align*}
\cL_{sdr}\left(\phi, \alpha, \beta \right) = \Big [{
 \sum_{(u,i)\in  \cD}  
\frac{o_{u,i}e_{u,i}}{\pi(x_{u,i}; \alpha)}
}\Big ]\Big /\Big [{ \sum_{(u,i)\in  \cD} 
\frac{o_{u,i}}{\pi(x_{u,i}; \alpha)}
}\Big ] + \lambda_{sdr} \|\phi\|_{F}^{2},
\end{align*}
where the first term is equivalent to the left hand side of equation (\ref{eq5}), and $\lambda_{sdr}\geq 0$. In cycle learning, the updated prediction model will be used for re-update the imputation model using the next sample batch. Notably, the designed algorithm strictly follows the proposed SDR estimator in Section \ref{sec:3.2}. From Figure \ref{fig:sdr} and Alg. \ref{alg1}, our algorithm first updates imputed errors $\hat e$ by Step 1, and then learns a propensity $\hat p$ based on learned $\hat e$ to satisfy the constraint (\ref{step2}) in Step 2. The main purpose of the first two steps is to ensure that the SDR estimator in Step 3 has double robustness and has a lower extrapolation dependence compared to the previous DR methods. Finally, from Step 3 we update the predicted rating $\hat r$ by minimizing the estimation of the ideal loss using the proposed SDR estimator. For the next round, instead of re-initializing, Step 1 updates the imputed errors $\hat e$ according to the new prediction model, then Step 2 re-updates the constrained propensities $\hat p$, and then uses Step 3 to update the prediction model $\hat r$ again, and so on. We summarized the cycle learning approach in Alg. \ref{alg1}.

\begin{algorithm}[t]
\caption{The Proposed Stable DR (MRDR) Cycle Learning, Stable-DR (MRDR)}
\label{alg1}
\LinesNumbered
\KwIn{observed ratings $\mathbf{Y}^{o}$, and $\eta, \lambda_{e}, \lambda_{stable}, \lambda_{sdr}\geq 0$}
\While{stopping criteria is not satisfied}{
    \For{number of steps for training the imputation model}{Sample a batch of user-item pairs $\left\{\left(u_{j}, i_{j}\right)\right\}_{j=1}^{J}$ from $\mathcal{O}$\;
    Update $\beta$ by descending along the gradient $\nabla_{\beta} \cL_{e}\left(\phi, \alpha, \beta \right)$\;
    }
    \For{number of steps for training the propensity model}
    {
    Sample a batch of user-item pairs $\left\{\left(u_{k}, i_{k}\right)\right\}_{k=1}^{K}$ from $\mathcal{D}$\;
    Calculate the gradient of propensity cross entropy error $\nabla_\alpha \cL_{{ce}}\left(\phi, \alpha, \beta \right)$\;
    Calculate the gradient of propensity stable {constraint (\ref{step2})} $\nabla_\alpha \cL_{stable}\left(\phi, \alpha, \beta \right)$\;
    Update $\alpha$ by descending along the gradient $\nabla_\alpha \cL_{ce}\left(\phi, \alpha, \beta \right)+\eta\cdot \nabla_\alpha \cL_{{stable}}\left(\phi, \alpha, \beta \right)$}
    \For{number of steps for training the prediction model}{
    Sample a batch of user-item pairs $\left\{\left(u_{l}, i_{l}\right)\right\}_{l=1}^{L}$ from $\mathcal{O}$\;
    Update $\phi$ by descending along the gradient $\nabla_{\phi} \cL_{sdr}\left(\phi, \alpha, \beta \right)$\;}
}
\end{algorithm}

\vspace{-6pt}
\section{Real-world Experiments}
\vspace{-6pt}
In this section, several experiments are conducted to evaluate the proposed methods on two real-world benchmark datasets. We conduct experiments to answer the following questions:
\begin{enumerate}
	\item[\bf RQ1.]  Do the proposed Stable-DR and Stable-MRDR approaches improve in debiasing performance compared to the existing studies?
	\item[\bf RQ2.]  Do our methods stably perform well under the various propensity models?

 \item[\bf RQ3.]  How does the performance of our method change under different strengths of the stabilization constraint?
	\end{enumerate}
\subsection{Experimental Setup}
{\bf Dataset and preprocessing.} To answer the above RQs, we need to use the datasets that contain both MNAR ratings and missing-at-random (MAR) ratings. Following the previous studies~\citep{Schnabel-Swaminathan2016, Wang-Zhang-Sun-Qi2019, MRDR, Chen-etal2021}, we conduct experiments on the two commonly used datasets: {\bf Coat\footnote{https://www.cs.cornell.edu/\textasciitilde schnabts/mnar/}} contains ratings from 290 users to 300 items. Each user evaluates 24 items, containing 6,960 MNAR ratings in total. Meanwhile, each user evaluates 16 items randomly, which generates 4,640 MAR ratings. {\bf Yahoo! R3\footnote{http://webscope.sandbox.yahoo.com/}} contains totally 311,704 MNAR and 54,000 MAR ratings from 15,400 users to 1,000 items.
\begin{table}[t]
 \centering
 \small
    \setlength{\tabcolsep}{5pt}
    \vspace{-6pt}
    \captionof{table}{Performance on Coat and Yahoo!R3, using MF, SLIM, and NCF as the base models.}
    \begin{tabular}{l|cccc|cccc}
\toprule
Dataset   & \multicolumn{4}{c|}{Coat}           & \multicolumn{4}{c}{Yahoo!R3}          \\
\cmidrule(r){1-1}  \cmidrule(lr){2-5} \cmidrule(lr){6-9}
Method    & MSE    & AUC    & N@5 & N@10 & MSE    & AUC    & N@5 & N@10 \\
          \midrule
MF        & 0.2428 & 0.7063 & 0.6025 & 0.6774  & 0.2500 & 0.6722 & 0.6374 & 0.7634  \\
+ IPS                & 0.2316 & 0.7166 & 0.6184 & 0.6897  & 0.2194 & 0.6742 & 0.6304 & 0.7556  \\
+ SNIPS              & 0.2333 & 0.7070 & 0.6222 & 0.6851  & \textbf{0.1931} & 0.6831 & 0.6348 & 0.7608  \\
+ AS-IPS            & \textbf{0.2121} & 0.7180 & 0.6160 & 0.6824  & 0.2391 & 0.6770 & 0.6364 & 0.7601  \\
+ CVIB                & 0.2195 & 0.7239 & 0.6285 & 0.6947  & 0.2625 & 0.6853 & 0.6513 & 0.7729  \\
\midrule
+ DR                 & 0.2298 & 0.7132 & 0.6243 & 0.6918  & 0.2093 & 0.6873 & 0.6574 & 0.7741  \\
+ DR-JL              & 0.2254 & 0.7209 & 0.6252 & 0.6961  & 0.2194 & 0.6863 & 0.6525 & 0.7701  \\
+ \textbf{Stable-DR (Ours)}   & 0.2159 & \textbf{0.7508} & \textbf{0.6511} & \textbf{0.7073}  & 0.2090 & \textbf{0.6946} & \textbf{0.6620} & \textbf{0.7786}  \\
\midrule
+ MRDR-JL            & 0.2252 & 0.7318 & 0.6375 & 0.6989  & 0.2173 & 0.6830 & 0.6437 & 0.7652  \\
+ \textbf{Stable-MRDR (Ours)} & \textbf{0.2076} & \textbf{0.7548} & \textbf{0.6532} & \textbf{0.7105}  & \textbf{0.2081} & \textbf{0.6915} & \textbf{0.6585} & \textbf{0.7757}  \\
\midrule
\midrule
SLIM      & {0.2419} & {0.7074} & {0.7064} & {0.7650}  & {0.2126} & {0.6636} & {0.7190} & {0.8134}  \\
{+ IPS}                & {0.2411} & {0.7058} & {0.7235} & {0.7644}  & {0.2046} & {0.6583} & {0.7285} & {0.8244}  \\
{+ SNIPS}              & {0.2420} & {0.7071} & {0.7369} & {0.7672}  & {0.2155} & {0.6720} & {0.7303} & {0.8227}  \\
{+ AS-IPS}             & {\textbf{0.2133}} & {0.7105} & {0.6238} & {0.6975}  & {\textbf{0.1946}} & {0.6769} & {0.6508} & {0.7702}  \\
{+ CVIB}                & {0.2413} & {0.7108} & {0.7214} & {0.7638}  & {\textbf{0.2024}} & {0.6790} & {0.7335} & {0.8221}  \\
\midrule
{+ DR}                 & {\textbf{0.2334}} & {0.7064} & {0.7267} & {0.7649}  & {0.2054} & {0.6771} & {0.7344} & {0.8248}  \\
{+ DR-JL}              & {0.2407} & {0.7090} & {0.7279} & {0.7655}  & {0.2044} & {0.6792} & {0.7360} & {0.8260}  \\
{+ \textbf{Stable-DR (Ours)}}   & {0.2356} & \textbf{{0.7201}} & \textbf{{0.7389}} & \textbf{{0.7724}}  & {0.2080} & {\textbf{0.6874}} & {\textbf{0.7473}} & {\textbf{0.8349}}  \\ \midrule
{+ MRDR-JL}            & {0.2409} & {0.7074} & {0.7329} & {0.7679}  & {0.2016} & {0.6791} & {0.7338} & {0.8239}  \\
{+ \textbf{Stable-MRDR (Ours)}} & {0.2369} & \textbf{{0.7148}} & \textbf{{0.7378}} & \textbf{{0.7711}}  & {0.2086} & {\textbf{0.6842}} & {\textbf{0.7435}} & {\textbf{0.8308}}  \\ \midrule \midrule
NCF       & 0.2050 & 0.7670 & 0.6228 & 0.6954  & 0.3215 & 0.6782 & 0.6501 & 0.7672  \\
+ IPS                & 0.2042 & 0.7646 & 0.6327 & 0.7054  & 0.1777 & 0.6719 & 0.6548 & 0.7703  \\
+ SNIPS              & 0.1904 & 0.7707 & 0.6271 & 0.7062  & 0.1694 & 0.6903 & 0.6693 & 0.7807  \\
+ AS-IPS             & 0.2061 & 0.7630 & 0.6156 & 0.6983  & 0.1715 & 0.6879 & 0.6620 & 0.7769  \\
+ CVIB                & 0.2042 & 0.7655 & 0.6176 & 0.6946  & 0.3088 & 0.6715 & 0.6669 & 0.7793  \\
\midrule
+ DR                 & 0.2081 & 0.7578 & 0.6119 & 0.6900  & 0.1705 & 0.6886 & 0.6628 & 0.7768  \\
+ DR-JL              & 0.2115 & 0.7600 & 0.6272 & 0.6967  & 0.2452 & 0.6818 & 0.6516 & 0.7678  \\
+ \textbf{Stable-DR (Ours)}   & \textbf{0.1896} & \textbf{0.7712} & \textbf{0.6337} & \textbf{0.7095}  & \textbf{0.1664} & \textbf{0.6907} & \textbf{0.6756} & \textbf{0.7861}    \\ \midrule
+ MRDR-JL            & 0.2046 & 0.7609 & 0.6182 & 0.6992  & 0.2367 & 0.6778 & 0.6465 & 0.7664  \\
+ \textbf{Stable-MRDR (Ours)} & \textbf{0.1899} & \textbf{0.7710} & \textbf{0.6380} & \textbf{0.7082}  & \textbf{0.1671} & \textbf{0.6910} & \textbf{0.6734} & \textbf{0.7846}  \\ \bottomrule
\end{tabular} \label{tab1}
\vskip -0.15in
\end{table}
{\bf Baselines.} In our experiments, we take {\bf Matrix Factorization (MF)}~\citep{koren2009matrix}, {\bf Sparse LInear Method (SLIM)}~\citep{ning2011slim}, and {\bf Neural Collaborative Filtering (NCF)}~\citep{he2017neural}
as the base model respectively, and compare against the proposed methods with the following baselines: {\bf Base Model}, {\bf IPS}~\citep{saito2020ips, Schnabel-Swaminathan2016}, {\bf SNIPS}~\citep{Swaminathan-Joachims2015}, {\bf IPS-AT}~\citep{saito2020asymmetric}, {\bf CVIB}~\citep{wang2020information}, 
{\bf DR}~\citep{saito2020doubly}, {\bf DR-JL}~\citep{Wang-Zhang-Sun-Qi2019}, and {\bf MRDR-JL}~\citep{MRDR}. In addition, {\bf Naive Bayes with Laplace smoothing} and {\bf logistic regression} are used to establish the propensity model respectively.

{\bf Experimental protocols and details.} The following four metrics are used simultaneously in the evaluation of debiasing performance: \emph{MSE, AUC, NDCG@5,} and \emph{NDCG@10}. All the experiments are implemented on PyTorch with Adam as the optimizer\footnote{For all experiments, we use NVIDIA GeForce RTX 3090 as the computing resource.}. We tune the learning rate in $\{0.005, 0.01, 0.05, 0.1\}$, weight decay in $\{1e-6, 5e-6, \dots, 5e-3, 1e-2\}$, constrain parameter eta in $\{50, 100, 150, 200\}$ for {\bf Coat} and $\{500, 1000, 1500, 2000\}$ for {\bf Yahoo! R3}, and batch size in $\{128, 256, 512, 1024, 2048\}$ for {\bf Coat} and $\{1024, 2048, 4096, 8192, 16384\}$ for {\bf Yahoo! R3}. 
In addition, for the Laplacian smooth parameter in Naive Bayes model, the initial value is set to 0 and the learning rate is tuned in $\{5, 10, 15, 20\}$ for {\bf Coat} and in $\{50, 100, 150, 200\}$ for {\bf Yahoo! R3}.

\subsection{Performance Comparison (RQ1)}
Table \ref{tab1} summarizes the performance of the proposed Stable-DR and Stable-MRDR methods compared with previous methods. First, the causally-inspired methods perform better than the base model, verifying the necessity of handling the selection bias in rating prediction. For previous methods, SNIPS, CVIB and DR demonstrate competitive performance. Second, the proposed Stable-DR and Stable-MRDR have the best performance in all four metrics. 
On one hand, our methods outperform SNIPS, attributed to the inclusion of the propensity model in the training process, as well as the boundedness and double robustness of SDR. On the other hand, our methods outperform DR-JL and MRDR-JL, attributed to the stabilization constraint introduced in the training of the propensity model. This further demonstrates the benefit of cycle learning, in which the propensity model is acted as the mediation between the imputation and prediction model during the training process, rather than updating the prediction model from the imputation model directly. 
\vspace{-4pt}
\subsection{Ablation and Parameter Sensitivity Study (RQ2, RQ3)}
\vspace{-2pt}
The debiasing performance under different stabilization constraint strength and propensity models is shown in Figure  \ref{fig:zt}. First, the proposed Stable-DR and Stable-MRDR outperform DR-JL and MRDR-JL, when either Naive Bayes with Laplace smoothing or logistic regression is used as propensity models. It indicates that our methods have better debiasing ability in both the feature containing and collaborative filtering scenarios. Second, when the strength of the stabilization constraint is zero, our method performs similarly to SNIPS and slightly worse than the DR-JL and MRDR-JL, which indicates that simply using cross-entropy loss to update the propensity model is not effective in improving the model performance. However, as the strength of the stabilization constraint increases, Stable-DR and Stable-MRDR using cycle learning have a stable and significant improvement compared to DR-JL and MRDR-JL. Our methods achieve the optimal performance at the appropriate constraint strength, which can be interpreted as simultaneous consideration of accuracy and stability to ensure boundedness and double robustness of SDR.

\begin{figure}[t]
    \centering
    \includegraphics[scale=0.35]{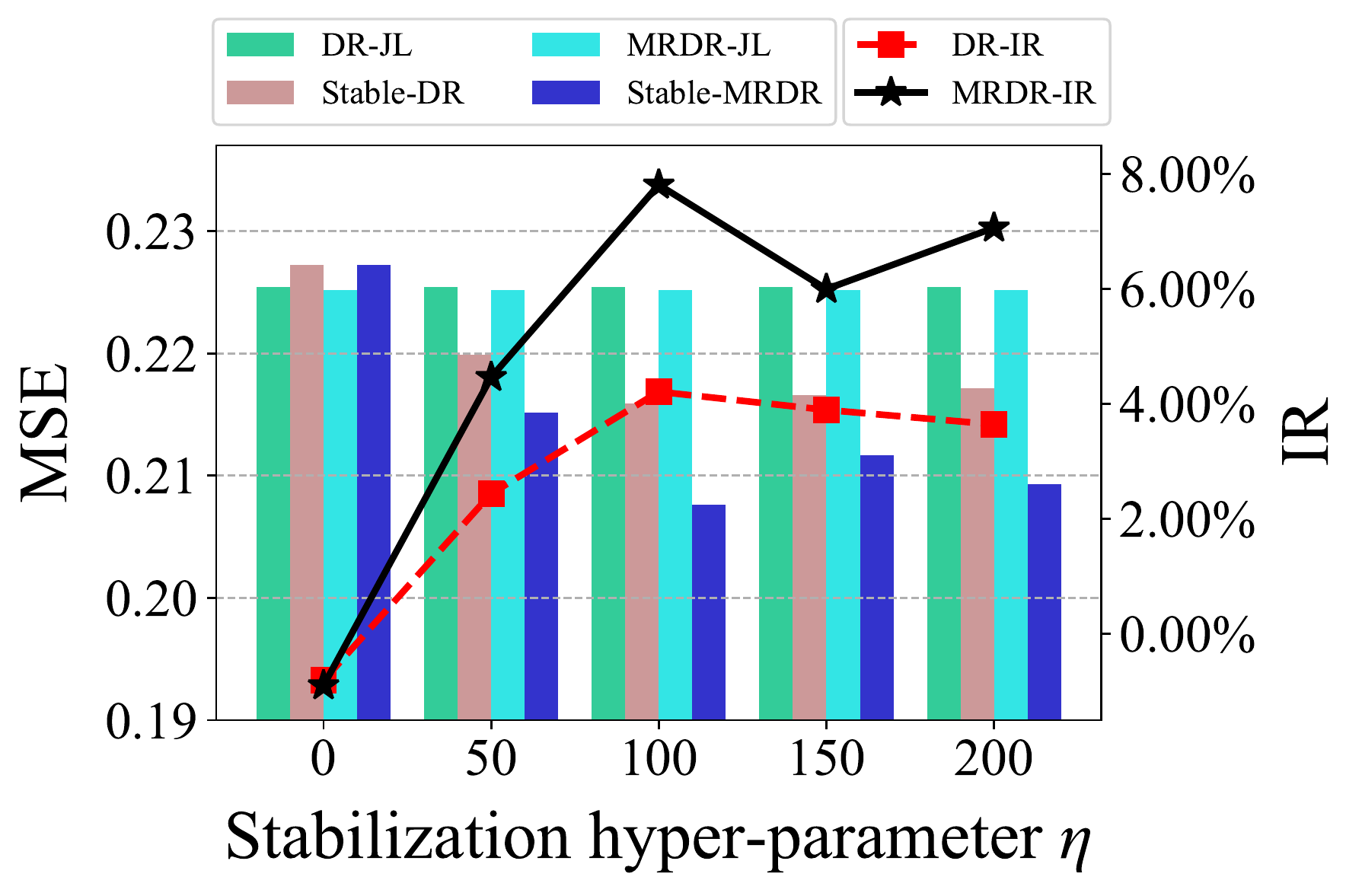} 
    \includegraphics[scale=0.35]{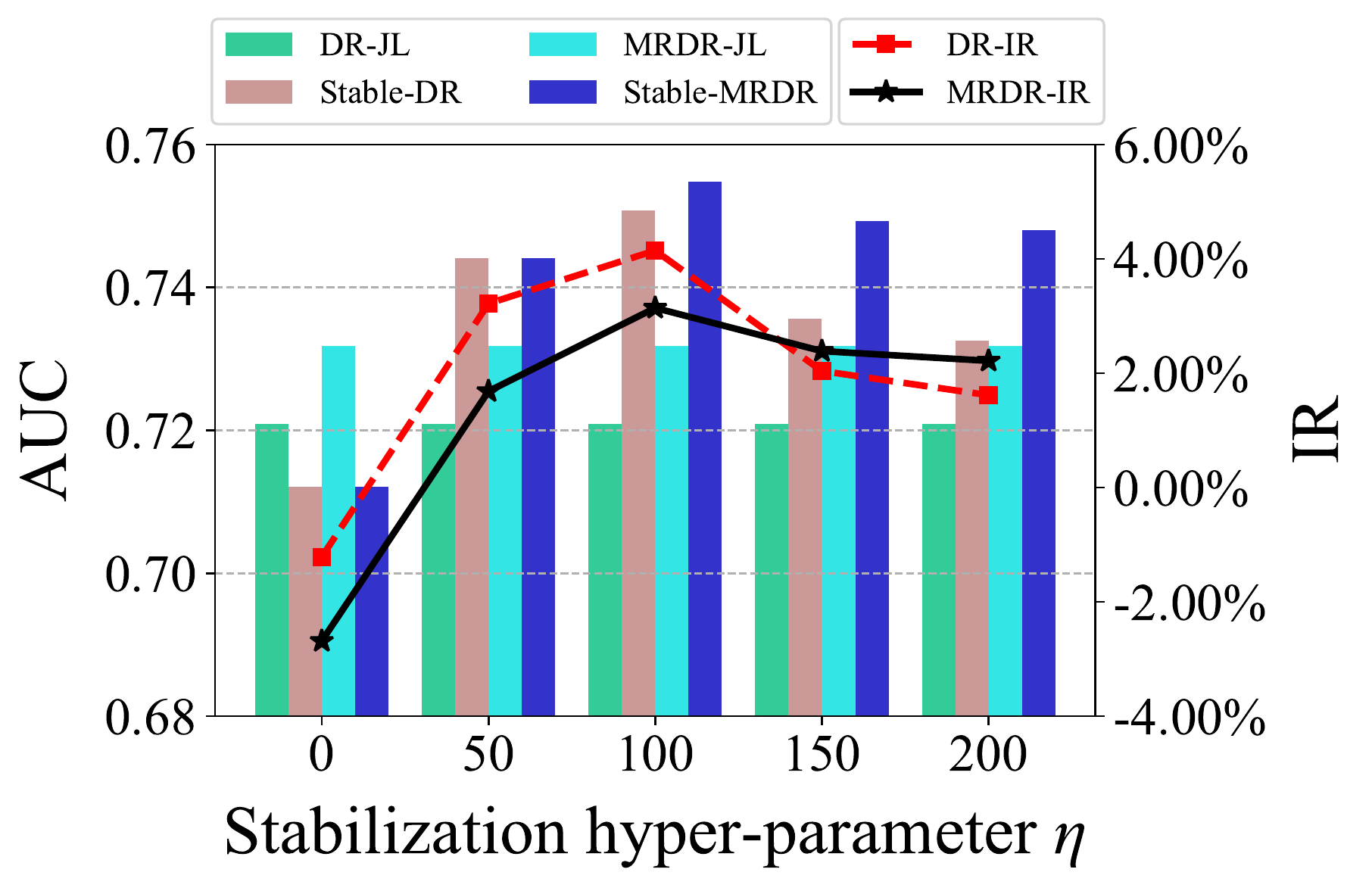} 
    \includegraphics[scale=0.35]{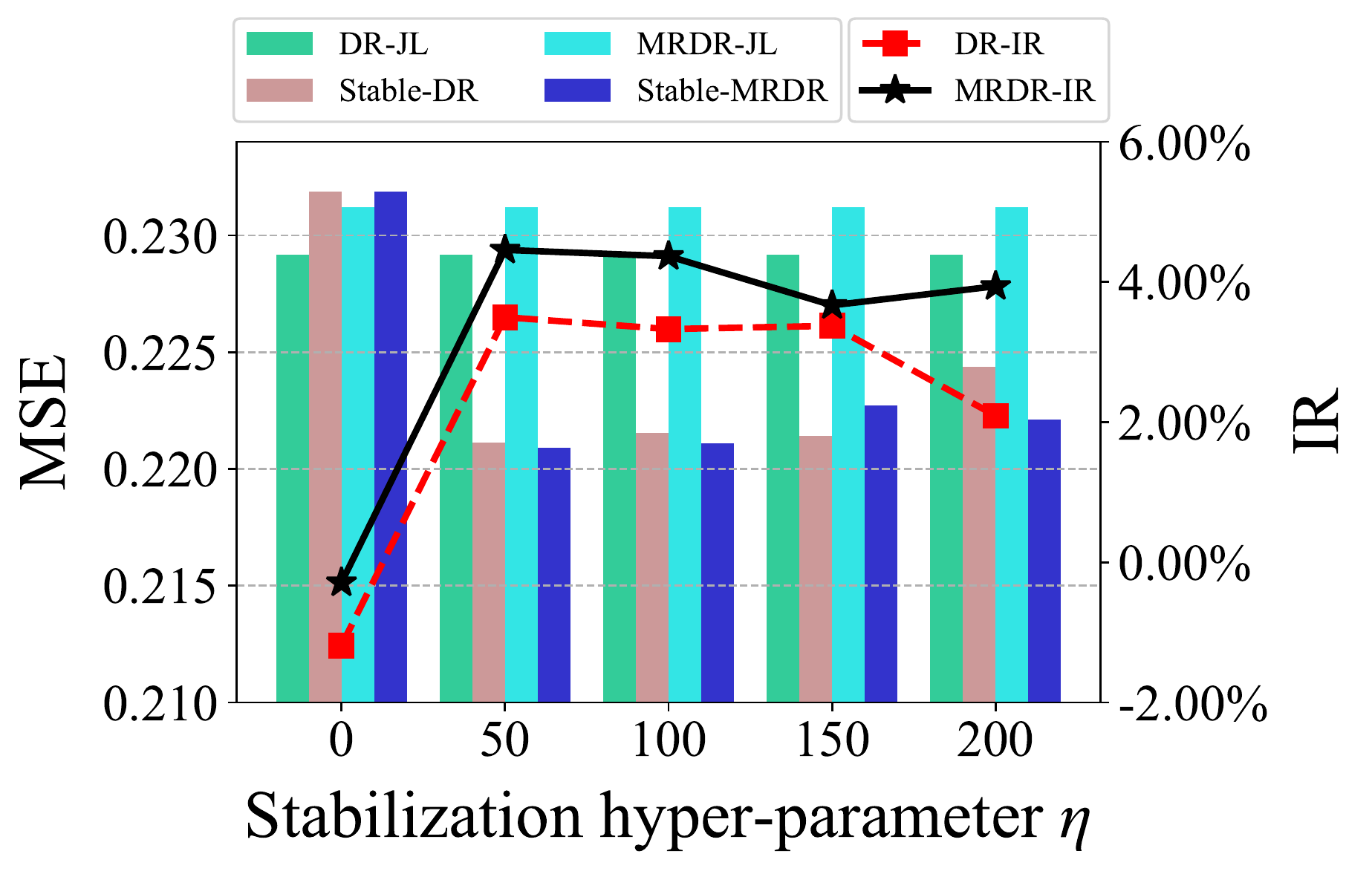} 
    \includegraphics[scale=0.35]{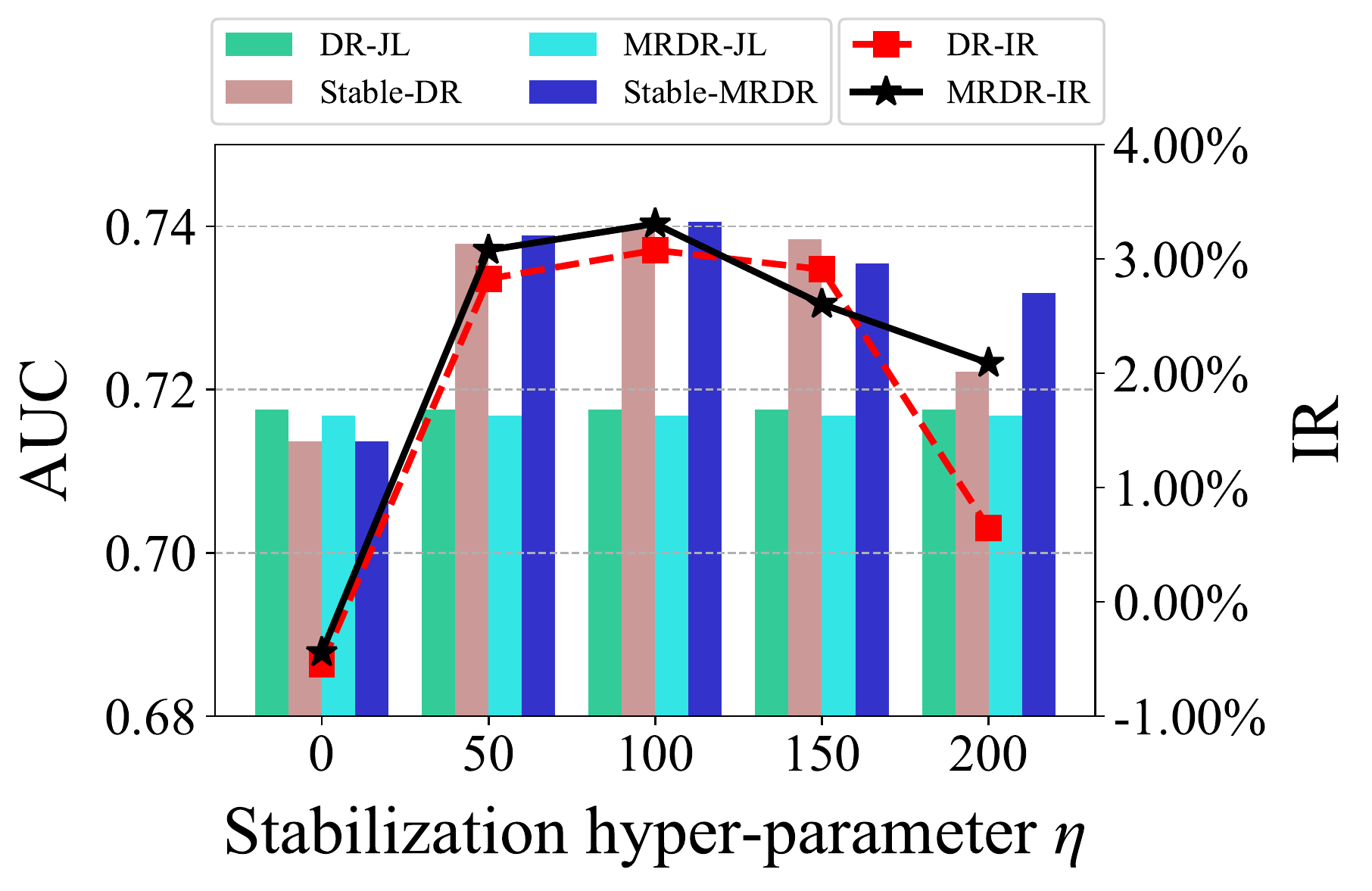} 
    \vspace{-6pt}
    \caption{MSE, AUC and Increasing Ratio (IR) of Stable-DR and Stable-MRDR comparing with two baseline algorithms DR-JL and MRDR-JL in two different propensity model setting: Naive Bayes with Laplace smoothing (Top) and logistic regression (Bottom) respectively.}
    \label{fig:zt}
    \vspace{-12pt}
\end{figure}

    \vspace{-11pt}
\section{Conclusion} 
    \vspace{-9pt}
In this paper, we propose an SDR estimator for data MNAR that maintains double robustness and improves the stability of DR in the following three aspects: first, we show that SDR has a weaker extrapolation dependence than DR and can result in more stable and accurate predictions in the presence of MNAR effects. Next, through theoretical analysis, we show that the proposed SDR has bounded bias, variance, and generalization error bounds  under inaccurate imputed errors and arbitrarily small estimated propensities, while DR does not. Finally, we propose a novel learning approach for SDR that updates the 
imputation, propensity, and prediction models cyclically, achieving more stable and accurate predictions. Extensive experiments show that our approach significantly outperforms the existing methods in terms of both convergence and prediction accuracy.

\section*{Ethics Statement} 
This work is mostly theoretical and experiments are based on synthetic and public datasets. We claim that this work does not present any foreseeable negative social impact.

\section*{Reproducibility Statement} 
Code is provided in Supplementary Materials to reproduce the experimental results.

\subsubsection*{Acknowledgments}
The work was supported by the National Key R\&D Program of China under Grant No. 2019YFB1705601.

\bibliography{iclr2023_conference}
\bibliographystyle{iclr2023_conference}



\newpage  
\appendix
\section*{Appendix}

Throughout, following existing studies~\citep{Schnabel-Swaminathan2016, Wang-Zhang-Sun-Qi2019, MRDR, Dai-etal2022}, we assume that the indicator matrix $\mathcal{O}$ contains independent random variables and each $o_{u, i}$ follows a Bernoulli distribution with probability $p_{u, i}$.

\section{Proof of theorems}
\subsection{Proof of Theorem 1} \label{app-th1}

\begin{proof}[Proof of Theorem 1]  
To demonstrate the double robustness of the SDR, first note that $P \left(\lim _{|\cD| \rightarrow \infty} \cE_{SDR}=\mathcal{L}_{ideal}\right)=1$ 
 if the learned propensities are accurate~\citep{Swaminathan-Joachims2015}, since $|\cD|^{-1} \sum_{(u,i)\in  \cD} o_{u,i}/\hat p_{u,i}$ converges to 1 almost surely as $|\cD|$ goes to infinity and IPS is unbiased. Besides, the constraint (\ref{step2}) is constructed to ensure the unbiasedness of $\cE_{SDR}$ if the error imputation model is correctly specified. In fact, 
$\cE_{SDR}$ satisfies 
\begin{align}\label{sdr2}
\frac{1}{|\cD|} \sum_{(u,i)\in  \cD} \frac{o_{u,i}(e_{u,i}-\cE_{SDR})}{\hat p_{u,i}} = 0.
\end{align} 
Combining the constraint (\ref{step2}) and equation (\ref{sdr2}) gives
\begin{align*} 
\frac{1}{|\cD|} \sum_{(u,i)\in  \cD} \left[
\frac{o_{u,i}(e_{u,i}-\hat e_{u,i})}{\hat p_{u,i}} + \frac{o_{u,i}(\hat  \cE-\cE_{SDR})}{\hat p_{u,i}}
\right] = 0,
\end{align*}  
where the first term equals to 0 when the imputation model is correctly specified, 
it implies that 
$\cE_{SDR} =\hat \cE$, then the unbiasedness of $\cE_{SDR}$ follows immediately from the unbiasedness of $\hat \cE$. 

\end{proof}

\subsection{Proof of Theorem 2} \label{app-a}
\begin{proof}[Proof of Theorem 2]  
Equation (\ref{eq5}) implies that   
$\cE_{SDR}$ can be expressed as 
\begin{equation}  \label{eq-s1}
\cE_{SDR}=\Big [{
\frac{1}{|\cD|} \sum_{(u,i)\in  \cD}  
\frac{o_{u,i}(e_{u,i}-\hat e_{u,i}+ \hat \cE)}{\hat p_{u,i}}
}\Big ]\Big /\Big [{\frac{1}{|\cD|} \sum_{(u,i)\in  \cD} 
\frac{o_{u,i}}{\hat p_{u,i}}
}\Big ].   
\end{equation} 
For notational simplicity, let 
\begin{align*}
 w_{u, i} \triangleq {o_{u,i}}/{\hat p_{u,i}} \quad \text{and} \quad v_{u, i} \triangleq  {o_{u,i}(e_{u,i}-\hat e_{u,i}+\hat \cE)}/{\hat p_{u,i}}, 
\end{align*}
then $\cE_{SDR}$ can be written as a ratio statistic   
\begin{align*}
\cE_{SDR} = {\frac{1}{|\cD|} \sum_{(u,i)\in  \cD}v_{u, i}} \Big / {\frac{1}{|\cD|} \sum_{(u,i)\in  \cD}w_{u, i}}\triangleq f(\bar v, \bar w),
\end{align*}
where $f( v,  w)= v / w$, $\bar v=|\cD|^{-1} {\sum_{(u,i)\in  \cD}v_{u, i}} $, and $\bar w= |\cD|^{-1}{\sum_{(u,i)\in  \cD}w_{u, 
i}}.$

Applying the Taylor expansion around $(\mu_v, \mu_w) \triangleq (\bfE[\bar v], \bfE[\bar w])$  yields  that 
\begin{align*}
f(\bar v, \bar w) &= f(\mu_{v}, \mu_{w})+f_{v}^{\prime}(\mu_{v}, \mu_{w})\left(\bar v-\mu_{v}\right)+f_{w}^{\prime}(\mu_{v}, \mu_{w})\left(\bar w-\mu_{w}\right) \\
&+\frac{1}{2}\left\{f_{vv}^{\prime \prime}(\mu_{v}, \mu_{w})\left(\bar v-\mu_{v}\right)^{2}+2 f_{v w}^{\prime \prime}(\mu_{v}, \mu_{w})\left(\bar v-\mu_{v}\right)\left(\bar w-\mu_{w}\right)+f_{w w}^{\prime \prime}\left(\bar w-\mu_{w}\right)^{2}\right\}  \\  
&+ R(\tilde v, \tilde w),
\end{align*}
where $R(\tilde v, \tilde w)$ is the remainder term. 
Note that $f_{vv}^{\prime \prime}(\mu_v, \mu_w) = 0$,  $f_{vw}^{\prime \prime}(\mu_v, \mu_w) =-1/\mu_w^{2}$, and $f^{\prime \prime}_{ww}(\mu_v, \mu_w) = 2 \mu_v / \mu_w^3$, then taking an expectation on both sides of the Taylor expansion leads to 
\[ 
\bfE(\bar v / \bar w)  = 
\frac{\mu_{v}}{\mu_{w}}-\frac{\operatorname{Cov}(\bar v, \bar w)}{\left(\mu_{w}\right)^{2}}+\frac{\operatorname{Var}(\bar w) \mu_{v}}{\left(\mu_{w}\right)^{3}} +  \bfE[R(\tilde v, \tilde w)].
\]
By some calculations, we have $\operatorname{Cov}(\bar v, \bar w) = O(|\cD|^{-1}),  \operatorname{Var}(\bar w) = O(|\cD|^{-1}),   \bfE[R(\tilde v, \tilde w)]  = o(|\cD|^{-1})$.  
Thus, the bias of $\cE_{SDR}$ is given as
$$
\begin{aligned}
{\operatorname{Bias}}(\cE_{SDR})&= \Biggl |\frac{1}{|\cD|} \sum_{(u,i)\in  \cD} \left ( \delta_{u, i}-\frac{\sum_{(u,i)\in  \cD}{ \delta_{u, i}p_{u,i}}/{\hat p_{u,i}}}{\sum_{(u,i)\in  \cD}{p_{u,i}}/{\hat p_{u,i}}} \right ) \Biggr | + O( |\cD|^{-1} ). 
\end{aligned}
$$


\end{proof}

\subsection{Proof of Theorem 3}\label{secondorder}
\begin{proof}[Proof of Theorem 3]

According to the proof of Theorem 2, 
we have 
    \begin{equation}  \label{eq-s2}
        \bfE[ \bar v / \bar w] - \mu_v/\mu_w  = O( |\cD|^{-1} ). 
    \end{equation}
Then the variance of $\cE_{SDR}$ can be decomposed into as 
    \begin{align*}
        \text{Var}( \cE_{SDR} ) ={}& \text{Var}( \bar v / \bar w  ) = \bfE \left [\left  \{ \bar v / \bar w  -  \bfE( \bar v / \bar w) \right  \}^2 \right ] \\
        ={}&  \bfE \left [\left  \{ \bar v / \bar w  -   \mu_v/\mu_w  \right  \}^2  - 2 O(|\cD|^{-1}) \cdot \left \{ \bar v / \bar w  -   \mu_v/\mu_w  \right  \} +   O(|\cD|^{-2})  \right ], \\
        ={}&  \cV_1 + \cV_2 +  O(|\cD|^{-2}),
    \end{align*}
 where $\cV_1 \triangleq \bfE  [  \{ \bar v / \bar w  -   \mu_v/\mu_w  \}^2]$, $\cV_2  \triangleq - 2 O(|\cD|^{-1}) \cdot [\bfE(\bar v / \bar w) - \mu_v/\mu_w]$. Equation (\ref{eq-s2}) implies that $\cV_2 = O(|\cD|^{-2})$.

Denote $f( v,  w)= v / w$, and apply delta method around $(\mu_v, \mu_w) \triangleq (\bfE[\bar v], \bfE[\bar w])$ to calculate $\cV_1$ yields that
\[
\begin{aligned}
\cV_1 ={}& \bfE\left\{\left[f(\mu_{v}, \mu_{w})+f_{v}^{\prime}(\mu_{v}, \mu_{w})\left(\bar v-\mu_{v}\right)+f_{w}^{\prime}(\mu_{v}, \mu_{w})\left(\bar w-\mu_{w}\right) +O_p( |\cD|^{-1} ) - f(\mu_{v}, \mu_{w})\right]^{2}\right\}\\
={}& \bfE\left\{\left[f_{v}^{\prime}(\mu_{v}, \mu_{w})\left(\bar v-\mu_{v}\right)+f_{w}^{\prime}(\mu_{v}, \mu_{w})\left(\bar w-\mu_{w}\right) +O_p( |\cD|^{-1} ) \right]^{2}\right\}\\
={}& \bfE\left\{f_{v}^{\prime 2}(\mu_{v}, \mu_{w})\left(\bar v-\mu_{v}\right)^{2}+2f_{v}^{\prime}(\mu_{v}, \mu_{w})\left(\bar v-\mu_{v}\right)f_{w}^{\prime}(\mu_{v}, \mu_{w})\left(\bar w-\mu_{w}\right)\right.\\
&+\left.f_{w}^{\prime 2}(\mu_{v}, \mu_{w})\left(\bar w-\mu_{w}\right)^{2}  \right\}+O( |\cD|^{-2} )\\
={}& f_{v}^{\prime 2}(\mu_{v}, \mu_{w})\operatorname{Var}(\bar v)+2f_{v}^{\prime}(\mu_{v}, \mu_{w})f_{w}^{\prime}(\mu_{v}, \mu_{w})\operatorname{Cov}(\bar v, \bar w)+ f_{w}^{\prime 2}(\mu_{v}, \mu_{w})\operatorname{Var}(\bar w)+O( |\cD|^{-2} )
\end{aligned}
\]
Note that $f_{v}^{\prime}(\mu_{v}, \mu_{w})=1/\mu_{w}$ and $f_{w}^{\prime}(\mu_{v}, \mu_{w})=-\mu_{v}/\mu_{w}^2$. Then we have
\[
\begin{aligned}
\cV_1 &= \frac{1}{\left(\mu_{w}\right)^{2}} \operatorname{Var}(\bar v)+2 \frac{-\mu_{v}}{\left(\mu_{w}\right)^{3}} \operatorname{Cov}(\bar v, \bar w)+\frac{\left(\mu_{v}\right)^{2}}{\left(\mu_{w}\right)^{4}} \operatorname{Var}(\bar w)+O( |\cD|^{-2} ) \\
&=\frac{\left(\mu_{v}\right)^{2}}{\left(\mu_{w}\right)^{2}}\left[\frac{\operatorname{Var}(\bar v)}{\left(\mu_{v}\right)^{2}}-2 \frac{\operatorname{Cov}(\bar v, \bar w)}{\mu_{v} \mu_{w}}+\frac{\operatorname{Var}(\bar w)}{\left(\mu_{w}\right)^{2}}\right]+O( |\cD|^{-2} ) \\
&=\frac{\bfE \left(\bar v-\frac{\mu_v}{\mu_w}\bar w\right)^2}{\mu^2_w}+O( |\cD|^{-2} )\\
&=\frac{\sum_{(u,i)} p_{u,i}(1-p_{u,i})h^2_{u,i}/\hat p^2_{u,i}}{\left(\sum_{(u,i)}p_{u,i}/\hat p_{u,i}\right)^2}+O( |\cD|^{-2} ),
\end{aligned}
\]
where $h_{u,i}=(e_{u,i}-\hat e_{u,i})-\sum_{(u,i)}\{p_{u,i}(e_{u,i}-\hat e_{u,i})/\hat p_{u,i}\}/\sum_{(u,i)}\{p_{u,i}/\hat p_{u,i}\}$ is a bounded difference between $e_{u,i}-\hat e_{u,i}$ and its weighted average. The conclusion that the SDR variance is bounded for any propensities is given directly by the self-normalized form of SDR, i.e., the bounded range of SDR is $[\delta_{\operatorname{min}}, \delta_{\operatorname{max}}]$.

\end{proof}



\subsection{Proof of Theorem 4} \label{app-b}
\begin{proof}[Proof of Theorem 4]
The McDiarmid's inequality states that for independent bounded random variables $X_{1}, X_{2}, \ldots X_{n}$, where $X_{i} \in \mathcal{X}_{i}$ for all $i$ and a mapping $f: \mathcal{X}_{1} \times \mathcal{X}_{2} \times \cdots \times \mathcal{X}_{n} \rightarrow \mathbb{R}$. Assume there exist constant $c_{1}, c_{2}, \ldots, c_{n}$ such that for all $i$,
$$
\sup _{x_{1}, \cdots, x_{i-1}, x_{i}, x_{i}^{\prime}, x_{i+1}, \cdots, x_{n}}\left|f\left(x_{1}, \ldots, x_{i-1}, x_{i}, x_{i+1}, \cdots, x_{n}\right)-f\left(x_{1}, \ldots, x_{i-1}, x_{i}^{\prime}, x_{i+1}, \cdots, x_{n}\right)\right| \leq c_{i}.
$$
Then, for any $\epsilon>0$,
$$
\P \left(\left|f\left(X_{1}, X_{2}, \cdots, X_{n}\right)-\mathbb{E}\left[f\left(X_{1}, X_{2}, \cdots, X_{n}\right)\right]\right| \geq \epsilon\right) \leq 2 \exp \left(-\frac{2 \epsilon^{2}}{\sum_{i=1}^{n} c_{i}^{2}}\right).
$$
In fact, equation (\ref{eq-s1}) implies that the SDR estimator can be written as  
\begin{align*}
\cE_{SDR}= {
\sum_{(u,i)\in  \cD}  
\frac{o_{u,i}(e_{u,i}-\hat e_{u,i})}{\hat p_{u,i}}
} \Big / {\sum_{(u,i)\in  \cD} 
\frac{o_{u,i}}{\hat p_{u,i}}
} + \hat \cE, 
\end{align*}
denoted as $f(o_{1,1}, \dots, o_{u,i}, \dots,  o_{U,I})$.  
{Note that} 
\begin{align}
&\sup _{o_{u,i}, o^{\prime}_{u,i}}\left|f\left(o_{1,1}, \dots, o_{u,i}, \dots,  o_{U,I}\right)-f\left(o_{1,1}, \dots, o^{\prime}_{u,i} \dots,  o_{U,I}\right)\right|\notag\\
&\leq 
\begin{cases}
\delta_{\operatorname{max}}-\dfrac{\delta_{u,i}/\hat p_{u,i}+\sum_{\cD\setminus(u,i)}o_{u,i}/\hat p_{u,i}\delta_{\operatorname{max}}}{1/\hat p_{u,i}+\sum_{\cD\setminus(u,i)}o_{u,i}/\hat p_{u,i}}, \quad &\text{if}\quad \delta_{u, i}\leq (\delta_{\operatorname{min}}+\delta_{\operatorname{max}})/2,\\
\dfrac{\sum_{\cD\setminus(u,i)}o_{u,i}/\hat p_{u,i}\delta_{\operatorname{min}}+\delta_{u,i}/\hat p_{u,i}}{\sum_{\cD\setminus(u,i)}o_{u,i}/\hat p_{u,i}+1/\hat p_{u,i}}-\delta_{\operatorname{min}}, \quad &\text{if}\quad \delta_{u, i}> (\delta_{\operatorname{min}}+\delta_{\operatorname{max}})/2, 
\end{cases}\label{thm4-1}
\end{align}
{where $\cD\setminus(u,i)$ is the set of $\cD$ excluding the element $(u,i)$.} 

{Next, we focus on analyzing the $\sum_{\cD\setminus(u,i)}o_{u,i}/\hat p_{u,i}$.}
The  Hoeffding's inequality states that for independent bounded random variables $X_{1}, \ldots, X_{n}$ that take values in intervals of sizes $\rho_{1}, \ldots, \rho_{n}$ with probability 1 and for any $\epsilon>0$, 
$$\P \Big ( \Big | \sum_{k} X_{k}-\bfE (\sum_{k} X_{k}) \Big | \geq \epsilon \Big ) \leq 2 \exp \left(\frac{-2 \epsilon^{2}}{\sum_{k} \rho_{k}^{2}}\right).$$
For $\sum_{\cD\setminus (u,i)}o_{u,i}/\hat p_{u,i}$, we have 
\begin{align*}
    \P \Big ( \Big | \sum_{\cD\setminus (u,i)}o_{u,i}/\hat p_{u,i}-\sum_{\cD\setminus(u,i)}p_{u,i}/\hat p_{u,i}\Big |\geq \epsilon)\leq 2 \exp \left( \frac{-2 \epsilon^{2} }{\sum_{\cD\setminus(u,i)} 1/\hat p^2_{u,i}}\right), 
\end{align*}
Setting the last term equals to $\eta/2$, and solving for $
\epsilon$ gives that 
with probability at least $1-\eta/2$, the following inequality holds 
\begin{equation}\label{thm4-2}
\Biggl | \sum_{\cD\setminus(u,i)} o_{u,i}/\hat p_{u,i}-\sum_{\cD\setminus (u,i)} p_{u,i}/\hat p_{u,i}\Biggr | \leq \sqrt{ \frac{1}{2} \log \frac{4}{\eta} {\sum_{\cD\setminus(u,i)}\frac{1}{\hat p^2_{u,i}}}}\triangleq \epsilon^{\prime}.
\end{equation}
Therefore, combining (\ref{thm4-1}) and (\ref{thm4-2}) yields that with probability at least $1-\eta/2$, 
$$
\begin{aligned}
&\sup _{o_{1,1}, \dots, o_{u,i}, o^{\prime}_{u,i}, \dots,  o_{U,I}}\left|f\left(o_{1,1}, \dots, o_{u,i}, \dots,  o_{U,I}\right)-f\left(o_{1,1}, \dots, o^{\prime}_{u,i} \dots,  o_{U,I}\right)\right|\\
&\leq 
{ \begin{cases}
\delta_{\operatorname{max}}-\dfrac{\delta_{u,i}/\hat p_{u,i}+(\sum_{\cD\setminus(u,i)}p_{u,i}/\hat p_{u,i}-\epsilon^{\prime})\delta_{\operatorname{max}}}{1/\hat p_{u,i}+(\sum_{\cD\setminus(u,i)}p_{u,i}/\hat p_{u,i}-\epsilon^{\prime})}, \quad &\text{if}\quad \delta_{u, i}\leq (\delta_{\operatorname{min}}+\delta_{\operatorname{max}})/2,\\
\dfrac{(\sum_{\cD\setminus(u,i)}p_{u,i}/\hat p_{u,i}-\epsilon^{\prime})\delta_{\operatorname{min}}+\delta_{u,i}/\hat p_{u,i}}{(\sum_{\cD\setminus(u,i)}p_{u,i}/\hat p_{u,i}-\epsilon^{\prime})+1/\hat p_{u,i}}-\delta_{\operatorname{min}}, \quad &\text{if}\quad \delta_{u, i}> (\delta_{\operatorname{min}}+\delta_{\operatorname{max}})/2,
\end{cases} }\\
&\leq 
\begin{cases}
(\delta_{\operatorname{max}}-\delta_{u, i})/\{1+\hat p_{u,i}(\sum_{\cD\setminus(u,i)}p_{u,i}/\hat p_{u,i}-\epsilon^{\prime})\}, \quad &\text{if}\quad \delta_{u, i}\leq (\delta_{\operatorname{min}}+\delta_{\operatorname{max}})/2,\\
(\delta_{u, i}-\delta_{\operatorname{min}})/\{1+\hat p_{u,i}(\sum_{\cD\setminus(u,i)}p_{u,i}/\hat p_{u,i}-\epsilon^{\prime})\}, \quad &\text{if}\quad \delta_{u, i}> (\delta_{\operatorname{min}}+\delta_{\operatorname{max}})/2, 
\end{cases}
\end{aligned}
$$
where $\delta_{u, i}=e_{u, i}-\hat{e}_{u, i}$ is the error deviation, $\delta_{\operatorname{min}}=\operatorname{min}_{(u,i)\in \cD}\delta_{u, i}$, and $\delta_{\operatorname{max}}=\operatorname{max}_{(u,i)\in \cD}\delta_{u, i}$.

Invoking  McDiarmid’s inequality leads to that 
\begin{footnotesize}
\begin{align*}
&\P\left(\left|\cE_{SDR}-\bfE_{{\cO}} (\cE_{SDR})\right|\geq \epsilon\right)\\
\leq{}& 2 \exp \Biggl \{ {-2 \epsilon^{2} } \Biggl/   \Biggl ( \sum_{(u,i): \delta_{u, i}\leq \frac{\delta_{\operatorname{min}}+\delta_{\operatorname{max}}}{2}} \frac{(\delta_{\operatorname{max}}-\delta_{u, i})^2}{\{1+\hat p_{u,i}(\sum_{\cD-(u,i)}p_{u,i}/\hat p_{u,i}-\epsilon^{\prime})\}^2} \\
&{}+ \sum_{(u,i): \delta_{u, i}> \frac{\delta_{\operatorname{min}}+\delta_{\operatorname{max}}}{2}}\frac{(\delta_{u, i}-\delta_{\operatorname{min}})^2}{\{1+\hat p_{u,i}(\sum_{\cD-(u,i)}p_{u,i}/\hat p_{u,i}-\epsilon^{\prime})\}^2} \Biggr  ) \Biggr \} \\
\leq{}& 2 \exp \left(\frac{-2 \epsilon^{2}}{\sum_{(u,i)} \{(\delta_{\operatorname{max}}-\delta_{u, i})^2+(\delta_{u, i}-\delta_{\operatorname{min}})^2\}/\{1+\hat p_{u,i}(\sum_{\cD-(u,i)}p_{u,i}/\hat p_{u,i}-\epsilon^{\prime})\}^2}\right)
\end{align*}
\end{footnotesize}
Setting the last term equals to $\eta/2$, and solving for $
\epsilon$ complete the proof.

\end{proof}

\subsection{Generalization Bound under Inaccurate Models}   \label{app-e}
\begin{theorem}[Generalization Bound under Inaccurate Models] \label{th5} 
For any finite hypothesis space of predictions $\mathcal{H}=\{\hat{\mathbf{Y}}^{1}, \ldots, \hat{\mathbf{Y}}^{|\mathcal{H}|}\}$, with probability $1-\eta$, the true risk $R(\hat{\mathbf{Y}}^{\dagger})$ deviates from the SDR estimator with imputed errors $\hat e_{u,i}$ and learned propensities $\hat p_{u,i}$ satisfying the stabilization constraint \ref{step2} is bounded by
$$
\begin{aligned}
R(\hat{\mathbf{Y}}^{\dagger}) \leq{}&  \mathcal{\hat E}_{SDR}(\hat{\mathbf{Y}}^{\dagger})+ \underbrace{\Biggl |\frac{1}{|\cD|} \sum_{(u,i)\in  \cD}\delta^\dagger_{u,i}-\frac{\sum_{(u,i)\in  \cD}{\delta^\dagger_{u,i}p_{u,i}}/{\hat p_{u,i}}}{\sum_{(u,i)\in  \cD}{p_{u,i}}/{\hat p_{u,i}}}\Biggr |}_{\text {Bias Term }}\\
&+\underbrace{\sqrt{\frac{1}{2}{\log \left(\frac{4|\mathcal{H}|}{\eta}\right)} \sum_{ (u, i) \in \mathcal{D}}\frac{(\delta_{\operatorname{max}}-\delta^\S_{u, i})^2+(\delta^\S_{u, i}-\delta_{\operatorname{min}})^2}{\{1+\hat p_{u,i}(\sum_{\cD\setminus(u,i)}p_{u,i}/\hat p_{u,i}-\epsilon^{\prime})\}^2}}}_{\text {Variance Term }}
\end{aligned}
$$
where $\delta_{u, i}^{\S}$ is the error deviation corresponding to the prediction model $$\hat{\mathbf{Y}}^{\S}=\operatorname{argmax}_{ \hat{\mathbf{Y}}^{h} \in \mathcal{H}}{\sum_{ (u, i) \in \mathcal{D}}\frac{(\delta_{\operatorname{max}}-\delta^\S_{u, i})^2+(\delta^\S_{u, i}-\delta_{\operatorname{min}})^2}{\{1+\hat p_{u,i}(\sum_{\cD\setminus(u,i)}p_{u,i}/\hat p_{u,i}-\epsilon^{\prime})\}^2}}.$$
\end{theorem}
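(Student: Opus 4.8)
The plan is to decompose the true risk at the selected model $\hat{\mathbf{Y}}^{\dagger}$ into (i) the SDR estimate itself, (ii) the bias of the SDR estimator at $\hat{\mathbf{Y}}^{\dagger}$, and (iii) the sampling deviation of the estimator around its conditional mean, and then to control (ii) by Theorem~\ref{thbias} and (iii) by Theorem~\ref{th4} strengthened with a union bound over $\mathcal{H}$. Concretely, I would start from the exact identity
\[
R(\hat{\mathbf{Y}}^{\dagger}) = \cE_{SDR}(\hat{\mathbf{Y}}^{\dagger}) + \bigl(R(\hat{\mathbf{Y}}^{\dagger}) - \bfE_{\cO}[\cE_{SDR}(\hat{\mathbf{Y}}^{\dagger})]\bigr) + \bigl(\bfE_{\cO}[\cE_{SDR}(\hat{\mathbf{Y}}^{\dagger})] - \cE_{SDR}(\hat{\mathbf{Y}}^{\dagger})\bigr),
\]
and bound the last two brackets by their absolute values. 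Since $R(\hat{\mathbf{Y}}^{\dagger})$ is just $\cL_{ideal}$ evaluated at $\hat{\mathbf{Y}}^{\dagger}$, the middle bracket equals $\operatorname{Bias}(\cE_{SDR})$ for that fixed prediction model.

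Because the learned propensities satisfy the stabilization constraint~(\ref{step2}), Theorem~\ref{thbias} applies verbatim and gives
\[
\bigl|R(\hat{\mathbf{Y}}^{\dagger}) - \bfE_{\cO}[\cE_{SDR}(\hat{\mathbf{Y}}^{\dagger})]\bigr| = \Biggl|\frac{1}{|\cD|}\sum_{(u,i)\in\cD}\Bigl(\delta^{\dagger}_{u,i} - \frac{\sum_{(u,i)\in\cD}\delta^{\dagger}_{u,i}p_{u,i}/\hat p_{u,i}}{\sum_{(u,i)\in\cD}p_{u,i}/\hat p_{u,i}}\Bigr)\Biggr| + O(|\cD|^{-1}),
\]
where $\delta^{\dagger}_{u,i}$ is the error deviation of $\hat{\mathbf{Y}}^{\dagger}$; this is exactly the Bias Term in the statement, the residual $O(|\cD|^{-1})$ being lower order and customarily dropped. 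For the sampling-deviation bracket, $\hat{\mathbf{Y}}^{\dagger}$ is chosen using $\cO$, so Theorem~\ref{th4} cannot be invoked at $\hat{\mathbf{Y}}^{\dagger}$ directly; instead I would apply it to each fixed $\hat{\mathbf{Y}}^{h}\in\mathcal{H}$ with confidence parameter $\eta/|\mathcal{H}|$, and then take a union bound over the $|\mathcal{H}|$ hypotheses, so that with probability at least $1-\eta$ the inequality
\[
\bigl|\cE_{SDR}(\hat{\mathbf{Y}}^{h}) - \bfE_{\cO}[\cE_{SDR}(\hat{\mathbf{Y}}^{h})]\bigr| \le \sqrt{\tfrac{1}{2}\log\!\Bigl(\tfrac{4|\mathcal{H}|}{\eta}\Bigr)\sum_{(u,i)\in\cD}\frac{(\delta_{\operatorname{max}}-\delta^{h}_{u,i})^2 + (\delta^{h}_{u,i}-\delta_{\operatorname{min}})^2}{\{1+\hat p_{u,i}(\sum_{\cD\setminus(u,i)}p_{u,i}/\hat p_{u,i}-\epsilon')\}^2}}
\]
holds simultaneously for every $h$, with $\epsilon'$ the $\log(4|\mathcal{H}|/\eta)$-version of the quantity defined in Theorem~\ref{th4}. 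In particular this holds at $h=\dagger$; since $\sqrt{\cdot}$ is increasing I may replace its right-hand side by its maximum over $\mathcal{H}$, and that maximum is attained at the $\hat{\mathbf{Y}}^{\S}$ defined in the statement, producing the Variance Term. Summing the three bounds yields the claim.

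The step I expect to be the main obstacle is propagating the union bound cleanly through Theorem~\ref{th4}, whose own proof is a two-stage argument: a Hoeffding event (probability $1-\eta/2$) that replaces the random $\sum_{\cD\setminus(u,i)}o_{u,i}/\hat p_{u,i}$ by $\sum_{\cD\setminus(u,i)}p_{u,i}/\hat p_{u,i}-\epsilon'$ in the bounded-difference constants, followed by McDiarmid's inequality (probability $1-\eta/2$) on those constants. I have to allocate the per-hypothesis failure budget $\eta/|\mathcal{H}|$ consistently across both stages and make sure the $\epsilon'$ that survives into the final Variance Term is the one carrying $\log(4|\mathcal{H}|/\eta)$ rather than $\log(4/\eta)$. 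Beyond this bookkeeping, the decomposition, the appeal to Theorem~\ref{thbias}, and the maximum-over-$\mathcal{H}$ step are routine.
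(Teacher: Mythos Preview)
Your proposal is correct and follows essentially the same route as the paper: decompose $R(\hat{\mathbf Y}^\dagger)$ into the SDR estimate plus a bias piece (handled by Theorem~\ref{thbias}) and a deviation piece (handled by Theorem~\ref{th4} combined with a union bound over $\mathcal H$, then maximized at $\hat{\mathbf Y}^\S$). Your remark about carefully threading the per-hypothesis failure budget through both the Hoeffding stage and the McDiarmid stage of Theorem~\ref{th4} is well taken; the paper's proof applies the union bound only at the McDiarmid step and leaves $\epsilon'$ unchanged, so on this point your plan is in fact more careful than the published argument.
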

\begin{proof}[Proof of Theorem 5]
Proof. Theorem \ref{th4} shows that for all predictions $\hat{\mathbf{Y}}^h\in \mathcal{H}$, we have
\begin{align*}
&P\left(\left|\cE_{SDR}(\hat{\mathbf{Y}}^h)-\bfE[\cE_{SDR}(\hat{\mathbf{Y}}^h)]\right|\geq \epsilon\right)\\
\leq{}& 2 \exp \left(\frac{-2 \epsilon^{2}}{\sum_{(u,i)} \{(\delta_{\operatorname{max}}-\delta^h_{u, i})^2+(\delta^h_{u, i}-\delta_{\operatorname{min}})^2\}/\{1+\hat p_{u,i}\}^2}\right)
\end{align*}
McDiarmid's inequality and union bound ensures the following uniform convergence results:
\begin{align*}
&P\left(\left|\cE_{SDR}(\hat{\mathbf{Y}}^\dagger)-\bfE[\cE_{SDR}(\hat{\mathbf{Y}}^\dagger)]\right| \leq \epsilon\right) \geq 1-\eta \\
&\Leftarrow P\left(\max _{\hat{\mathbf{Y}}^h \in \mathcal{H}}\left|\cE_{SDR}(\hat{\mathbf{Y}}^h)-\bfE[\cE_{SDR}(\hat{\mathbf{Y}}^h)]\right| \leq \epsilon\right) \geq 1-\eta \\
&\Leftrightarrow P\left(\bigvee_{\hat{\mathbf{Y}}_{i} \in \mathcal{H}}\left|\cE_{SDR}(\hat{\mathbf{Y}}^h)-\bfE[\cE_{SDR}(\hat{\mathbf{Y}}^h)]\right| \geq \epsilon\right)<\eta \\
&\Leftarrow \sum_{h=1}^{|\mathcal{H}|} P\left(\left|\cE_{SDR}(\hat{\mathbf{Y}}^h)-\bfE[\cE_{SDR}(\hat{\mathbf{Y}}^h)]\right| \geq \epsilon\right)<\eta \\
&\Leftarrow \sum_{h=1}^{|\mathcal{H}|} 2 \exp \left(\frac{-2 \epsilon^{2}}{\sum_{(u,i)} \{(\delta_{\operatorname{max}}-\delta^h_{u, i})^2+(\delta^h_{u, i}-\delta_{\operatorname{min}})^2\}/\{1+\hat p_{u,i}(\sum_{\cD-(u,i)}p_{u,i}/\hat p_{u,i}-\epsilon^{\prime})\}^2}\right)<\eta \\
&\Leftarrow|\mathcal{H}| \cdot 2 \exp \left(\frac{-2 \epsilon^{2}}{\sum_{(u,i)} \{(\delta_{\operatorname{max}}-\delta^\S_{u, i})^2+(\delta^\S_{u, i}-\delta_{\operatorname{min}})^2\}/\{1+\hat p_{u,i}(\sum_{\cD-(u,i)}p_{u,i}/\hat p_{u,i}-\epsilon^{\prime})\}^2}\right)<\eta
\end{align*}
Solving the last inequality for $\epsilon$, it is concluded that, with probability $1-\eta$, the following inequality holds
$$ \bfE[\cE_{SDR}(\hat{\mathbf{Y}}^\dagger)]-\cE_{SDR}(\hat{\mathbf{Y}}^\dagger) \leq \sqrt{\frac{1}{2}{\log \left(\frac{4|\mathcal{H}|}{\eta}\right)} \sum_{ (u, i) \in \mathcal{D}}\frac{(\delta_{\operatorname{max}}-\delta^\S_{u, i})^2+(\delta^\S_{u, i}-\delta_{\operatorname{min}})^2}{\{1+\hat p_{u,i}(\sum_{\cD\setminus(u,i)}p_{u,i}/\hat p_{u,i}-\epsilon^{\prime})\}^2}}.
$$ 
Theorem \ref{thbias} shows that for the optimal prediction model $\hat{\mathbf{Y}}^\dagger$, the following inequality holds
$$
\begin{aligned}
R(\hat{\mathbf{Y}}^{\dagger})-\bfE[\cE_{SDR}(\hat{\mathbf{Y}}^\dagger)] \leq \Biggl |\frac{1}{|\cD|} \sum_{(u,i)\in  \cD}\delta^\dagger_{u,i}-\frac{\sum_{(u,i)\in  \cD}{\delta^\dagger_{u,i}p_{u,i}}/{\hat p_{u,i}}}{\sum_{(u,i)\in  \cD}{p_{u,i}}/{\hat p_{u,i}}}\Biggr |
\end{aligned}.
$$
The stated results can be obtained by adding the two inequalities above.

\end{proof}

\section{Further Theoretical Analysis of SDR} \label{app-f}

Without loss of generality,  we assume 
    $$ \frac{1}{|\mathcal{D}|} \sum_{(u,i)\in \mathcal{D} } \frac{o_{u,i}}{\hat p_{u,i}} ( \hat e_{u,i} - \mathcal{\hat E} )  = \lambda, \quad  \lambda \neq 0.$$
In this case, the learned propensities must be inaccurate; otherwise, the constraint (\ref{step2}) holds naturally as the same size increases. 
Thus, if the imputed errors are accurate, then $\mathcal{L}_{ideal} =  \mathcal{\hat E}$.  
    By a exactly same arguments of equation (\ref{eq5}), we have 
        $$ \frac{1}{|\mathcal{D}|} \sum_{(u,i)\in \mathcal{D} } \frac{o_{u,i}}{\hat p_{u,i}} ( \mathcal{\hat E} - \mathcal{E}_{SDR} )  = \lambda, $$
which implies that 
    $$  \mathcal{E}_{SDR}   = \mathcal{L}_{ideal} -  \lambda \Big / \frac{1}{|\mathcal{D}|} \sum_{(u,i)\in \mathcal{D} } \frac{o_{u,i}}{\hat p_{u,i}}.$$
        This means that the degree of violation of constraint (\ref{step2}) determines the size of the bias of SDR.

Furthermore, we can compute the bias, variance, tail bound, and generalization error bound of SDR. Specifically, if both the learned propensities and imputed errors are inaccurate, constraint (3) does not hold either. Then the bias of SDR is   
    $$ 
{\text{Bias}}(\mathcal{E}_{SDR}) = \Biggl |\frac{1}{|\mathcal{D}|} \sum_{(u,i)\in  \mathcal{D}} \left ( e_{u, i}-\frac{\sum_{(u,i)\in  \mathcal{D}}{ e_{u, i}p_{u,i}}/\hat p_{u,i}}{\sum_{(u,i)\in  \mathcal{D}}{p_{u,i}}/\hat p_{u,i}} \right ) \Biggr | + O( |\mathcal{D}|^{-1} ),
 $$
the variance of SDR becomes 
    $$
\text{Var}\left(\mathcal{E}_{SDR}\right)=\frac{\sum_{(u,i)} p_{u,i}(1-p_{u,i})\tilde h^2_{u,i}/\hat p^2_{u,i}}{\left(\sum_{(u,i)}p_{u,i}/\hat p_{u,i}\right)^2}+O( |\mathcal{D}|^{-2} ),
 $$
where $\tilde h_{u,i} = e_{u,i} - \sum_{(u,i)\in \mathcal{D}} \{ p_{u,i} e_{u,i} / \hat p_{u,i} \} \Big /  \sum_{(u,i)\in \mathcal{D}} \{ p_{u,i} / \hat p_{u,i} \}.$ 
 The tail bound of SDR is given as 
    $$\left|\mathcal{E}_{SDR}-\mathbb{E}_{{\mathcal{O}}} (\mathcal{E}_{SDR})\right| \leq \sqrt{\frac{1}{2}{\log \left(\frac{4}{\eta}\right)} \sum_{ (u, i) \in \mathcal{D}}\frac{(e_{\text{max}}-e_{u, i})^2+(e_{u, i}-e_{\text{min}})^2}{\{1+\hat p_{u,i}(\sum_{\mathcal{D}\setminus(u,i)}p_{u,i}/\hat p_{u,i}- \epsilon^{\prime} )\}^2}},$$
where $\delta_{\text{min}}=\text{min}_{(u,i)\in \mathcal{D}}e_{u, i}$, $\delta_{\text{max}}=\text{max}_{(u,i)\in \mathcal{D}}e_{u, i}$,  $\epsilon^{\prime}={\small \sqrt{ \log(4/\eta) / 2 \cdot   \sum_{\mathcal{D}\setminus(u,i)} 1/ \hat p^2_{u,i}}}$, and $\mathcal{D}\setminus(u,i)$ is the set of $\mathcal{D}$ excluding the element $(u,i)$.  In addition, we can derive the generation error bound of SDR.  
Given a finite hypothesis space $\mathcal{H}$ of the prediction model, then for any a prediction model $h \in \mathcal{H}$,  with probability $1-\eta$, the true risk $R(h)$ deviates from the SDR estimator is bounded by 
$$R(h) \leq  \mathcal{\hat E}_{SDR}(h) + 
{\text{Bias}}(\mathcal{E}_{SDR}) + \sqrt{\frac{1}{2}{\log (\frac{4|\mathcal{H}|}{\eta})} \sum_{ (u, i) \in D}\frac{(e_{\text{max}}-e^\S_{u, i})^2+(e^\S_{u, i}-e_{\text{min}})^2}{\{1+\hat p_{u,i}(\sum_{\mathcal{D}\setminus(u,i)}p_{u,i}/\hat p_{u,i}-\epsilon^{\prime})\}^2}},$$
where $e_{u, i}^{\S}$ is the error deviation corresponding to the prediction model
$$h^{\S}=\arg\max_{ h \in \mathcal{H}} \sum_{ (u, i) \in D}\frac{(e_{max}-e^\S_{u, i})^2+(e^\S_{u, i}-e_{min})^2}{\{1+\hat p_{u,i}(\sum_{D-(u,i)}p_{u,i}/\hat p_{u,i}-\epsilon^{\prime})\}^2}.$$

\end{document}